\newtheorem{theorem}{Theorem}
\newtheorem{lemma}[theorem]{Lemma}
\newtheorem{corollary}[theorem]{Corollary}
\newcommand{\oea}{\mbox{$(1 + 1)$~EA}\xspace}
\newcommand{\mclea}{\mbox{${(\mu,\lambda)}$~EA}\xspace}
\newcommand{\oclea}{\mbox{$(1,\lambda)$~EA}\xspace}
\newcommand{\onemax}{\textsc{OneMax}\xspace}
\DeclareMathOperator{\fp}{fp}
\DeclareMathOperator{\mut}{mut}
\DeclareMathOperator{\sel}{sel}
\DeclareMathOperator{\poly}{poly}
\newcommand{\R}{\ensuremath{\mathbb{R}}}
\newcommand{\N}{\ensuremath{\mathbb{N}}} % ohne Null!!!
\newcommand{\Z}{\ensuremath{\mathbb{Z}}}
\newcommand{\calP}{\ensuremath{\mathcal{P}}}
\DeclareMathOperator{\Bin}{Bin}
\newcommand{\eps}{\varepsilon}
\newcommand{\assign}{\leftarrow}
\begin{document}
%\begin{large}
\title{\vspace*{-1cm}Lower Bounds for Non-Elitist Evolutionary Algorithms via Negative Multiplicative Drift\thanks{Extended version of a paper that appeared in the proceedings of PPSN 2020~\cite{Doerr20ppsnLB}. This version contains all proofs (for reasons of space, in~\cite{Doerr20ppsnLB} only Lemma~1 and Theorem~2 were proven), a new section on standard bit mutation with random mutation rates, and several additional details.}}
%\titlerunning{Lower Bounds Via Multiplicative Drift}

\author{Benjamin Doerr\setcounter{footnote}{6}\thanks{Laboratoire d'Informatique (LIX), CNRS, \'Ecole Polytechnique, Institut Polytechnique de Paris, Palaiseau, France%.\\ {\tt lastname@lix.polytechnique.fr}
}}

\date{}

\maketitle

{\sloppy

\begin{abstract}
  A decent number of lower bounds for non-elitist population-based evolutionary algorithms has been shown by now. Most of them are technically demanding due to the (hard to avoid) use of negative drift theorems -- general results which translate an expected movement away from the target into a high hitting time. 
  
  We propose a simple negative drift theorem for multiplicative drift scenarios and show that it can simplify existing analyses. We discuss in more detail Lehre's (PPSN 2010) \emph{negative drift in populations} method, one of the most general tools to prove lower bounds on the runtime of non-elitist mutation-based evolutionary algorithms for discrete search spaces. Together with other arguments, we obtain an alternative and simpler proof of this result, which also strengthens and simplifies this method. In particular, now only three of the five technical conditions of the previous result have to be verified. The lower bounds we obtain are explicit instead of only asymptotic. This allows to compute concrete lower bounds for concrete algorithms, but also enables us to show that super-polynomial runtimes appear already when the reproduction rate is only a $(1 - \omega(n^{-1/2}))$ factor below the threshold. For the special case of algorithms using standard bit mutation with a random mutation rate (called uniform mixing in the language of hyper-heuristics), we prove the result stated by Dang and Lehre (PPSN 2016) and extend it to mutation rates other than $\Theta(1/n)$, which includes the heavy-tailed mutation operator proposed by Doerr, Le, Makhmara, and Nguyen (GECCO 2017). We finally use our method and a novel domination argument to show an exponential lower bound for the runtime of the mutation-only simple genetic algorithm on \onemax for arbitrary population size.
%  
%  \keywords{Runtime analysis \and drift analysis \and lower bounds \and population-based algorithms \and theory \and discrete optimization.}
\end{abstract}

\section{Introduction}

Lower bounds for the runtimes of evolutionary algorithms are important as they can warn the algorithm user that certain algorithms or certain parameter settings will not lead to good solutions in acceptable time. Unfortunately, the existing methods to obtain such results, for non-elitist algorithms in particular, are very technical and thus difficult to use.

One reason for this high complexity is the use of drift analysis, which seems hard to circumvent. Drift analysis~\cite{Lengler20bookchapter} is a set of tools that all try to derive useful information on a hitting time (e.g., the first time a solution of a certain quality is found) from information on the expected progress in one iteration. The hope is that the progress in a single iteration can be analyzed with only moderate difficulty and then the drift theorem does the remaining work. While more direct analysis methods exist and have been successfully used for simple algorithms, for population-based algorithms and in particular non-elitist ones, it is hard to imagine that the complicated population dynamics can be captured in proofs not using more advanced tools such as drift analysis. 

Drift analysis has been used with great success to prove upper bounds on runtimes of evolutionary algorithms. Tools such as the additive~\cite{HeY01}, multiplicative~\cite{DoerrJW12algo}, and variable drift theorem~\cite{MitavskiyRC09,Johannsen10} all allow to easily obtain an upper bound on a hitting time solely from the expected progress in one iteration. Unfortunately, proving matching lower bounds is much harder since here the drift theorems require additional technical assumptions on the distribution of the progress in one iteration. This is even more true in the case of so-called \emph{negative drift}, where the drift is away from the target and we aim at proving a high lower bound on the hitting time. 

In this work, we propose a very simple negative drift theorem for the case of multiplicative drift (Lemma~\ref{lem:nddrift}). We brief{}ly show that this result can ease two classic lower bound analyses (also in Section~\ref{sec:nddrift}).

In more detail, we use the new drift theorem (and some more arguments) to rework Lehre's \emph{negative drift in populations} method~\cite{Lehre10}. This highly general analysis method allows to show exponential lower bounds on the runtime of a large class of evolutionary algorithms solely by comparing the so-called reproduction rate of individuals in the population with a threshold that depends only on the mutation rate. 

The downside of Lehre's method is that both the result and its proof are very technical. To apply the general result (and not the specialization to algorithms using standard bit mutation), five technical conditions need to be verified, which requires the user to choose suitable values for six different constants; these have an influence on the lower bound one obtains. This renders the method of Lehre hard to use. Among the 54 citations to~\cite{Lehre10} (according to Google scholar on June 9, 2020), only the two works~\cite{Lehre11,DangL16ppsn} apply this method. To hopefully ease future analyses of negative drift in populations, we revisit this method and obtain the following improvements.

\emph{A simpler result:} We manage to show essentially the same lower bound by only verifying three of the five conditions of Lehre's result (Theorems~\ref{thm:main} and~\ref{thm:main2}). This also reduces the number of constants one needs to choose from six to four.

\emph{A non-asymptotic result:} Our result gives explicit lower bounds, that is, free from asymptotic notation or unspecified constants. %\footnote{It is clear that, in principle, also the result in~\cite{Lehre10} could have been stated with explicit constants, but we cannot say how easy this would have been. A reviewer of this conference claimed that the asymptotic notation was only added in the last line of the proof of~\cite[Theorem~1]{Lehre10}, but this is obviously not true.} 
Consequently, our specialization to algorithms using standard bit mutation (Theorem~\ref{thm:sbm}) also gives explicit bounds. This allows one to prove concrete bounds for specific situations (e.g., that the \mclea with $\lambda = 2\mu$ needs more than 13 million fitness evaluations to find the optimum of the \onemax problem defined over bit strings of length $n=500$, see the example following Theorem~\ref{thm:sbm}) and gives more fine-grained theoretical results (by choosing Lehre's constant $\delta$ as a suitable function of the problems size, we show that a super-polynomial runtime behavior is observed already when the reproduction rate is only a $(1 - \omega(n^{1/2}))$ factor below the threshold, see Corollary~\ref{cor:sbm}). With the absence of asymptotic notation, we can also analyze algorithms using standard bit mutation with a mutation rate chosen randomly from a discrete set of alternatives (Section~\ref{sec:randsbm}). Such a result was stated by Dang and Lehre~\cite{DangL16ppsn}, however only for mutation rates that are $\Theta(1/n)$. Our result does not need this restriction and thus, for example, applies also to the heavy-tailed mutation operator proposed in~\cite{DoerrLMN17}.

\emph{A simple proof:} Besides the important aspect that a proof guarantees the result to be mathematically correct, an understandable proof can also tell us why a result is correct and give further insights into working principles of algorithms. While every reader will have a different view on how the ideal proof looks like, we felt that Lehre's proof, combining several deep and abstract tools such as multi-type branching processes, eigenvalue arguments, and Hajek's drift theorem~\cite{Hajek82}, does not easily give a broader understanding of the proof mechanics and the working principles of the algorithms analyzed. Our proof, based on a simple potential function argument together with our negative drift theorem, hopefully is more accessible.

Finally, we analyze an algorithm using fitness proportionate selection. The negative drift in populations method is not immediately applicable to such algorithms since it is hard to provide a general unconditional upper bound on the reproduction rate: If all but one individual have a very low fitness, then this best individual has a high reproduction rate. We therefore show that at all times all search points are at least as good (in a stochastic domination sense) as random search points. This allows to argue that the reproduction rates are low and then gives a simple proof of an exponential lower bound for the mutation-only simple genetic algorithm (simple GA) with arbitrary population size optimizing the simple \onemax benchmark, improving over the mildly sub-exponential lower bound in~\cite{NeumannOW09} and the exponential lower bound  for large population sizes only in~\cite{Lehre11}.

\subsection*{Related Works}

A number of different drift theorems dealing with negative drift have been proven so far, among others, in~\cite{HappJKN08,OlivetoW11,OlivetoW12,RoweS14,OlivetoW15,Kotzing16,LenglerS18,Witt19} (note that in early works, the name ``simplified drift theorem'' was used for such results). They all require some additional assumptions on the distribution of the one-step progress, which makes them non-trivial to use. We refer to~\cite[Section~2.4.3]{Lengler20bookchapter} for more details. Another approach to negative drift was used in~\cite{AntipovDY19,Doerr19foga,Doerr20gecco}. There the original process was transformed suitably (via an exponential function), but in a way that the drift of the new process still is negative or at most a small constant. To this transformed process the lower bound version of the additive drift theorem~\cite{HeY01} was applied, which gave large lower bounds since the target, due to the exponential rescaling, now was far from the starting point of the process.

In terms of lower bounds for non-elitist algorithms, besides Lehre's general result~\cite{Lehre10}, the following results for particular algorithms exist (always, $n$ is the problem size, $\eps$ can be any positive constant, and $e \approx 2.718$ is the base of the natural logarithm). J\"agersk\"upper and Storch~\cite[Theorem~1]{JagerskupperS07} showed that the \oclea with $\lambda \le \frac{1}{14} \ln(n)$ is inefficient on any pseudo-Boolean function with a unique optimum. The asymptotically tight condition $\lambda \le (1-\eps) \log_{\frac{e}{e-1}} n$ to yield a super-polynomial runtime was given by Rowe and Sudholt~\cite{RoweS14}. Happ, Johannsen, Klein, and Neumann~\cite{HappJKN08} showed that two simple (1+1)-type hillclimbers with fitness proportionate selection cannot optimize efficiently any linear function with positive weights. Neumann, Oliveto, and Witt~\cite{NeumannOW09} showed that a mutation-only variant of the simple GA with fitness proportionate selection is inefficient on the \onemax function when the population size $\mu$ is at most polynomial, and it is inefficient on any pseudo-Boolean function with unique global optimum when $\mu \le \frac 14 \ln(n)$. The mildly subexponential lower bound for \onemax was improved to an exponential lower bound by Lehre~\cite{Lehre11}, but only for $\mu \ge n^3$. In a series of remarkable works up to~\cite{OlivetoW15}, Oliveto and Witt showed that the true simple GA using crossover cannot optimize \onemax efficiently when $\mu \le n^{\frac 14 - \eps}$. None of these results gives an explicit lower bound or specifies the base of the exponential function. In~\cite{AntipovDY19}, an explicit lower bound for the runtime of the \mclea is proven (but stated only in the proof of Theorem~3.1 in~\cite{AntipovDY19}). Section~3 of~\cite{AntipovDY19} bears some similarity with ours, in fact, one can argue that our work extends~\cite[Section~3]{AntipovDY19} from a particular algorithm to the general class of population-based processes regarded by Lehre~\cite{Lehre10} (where, naturally, \cite{AntipovDY19} did not have the negative multiplicative drift result and therefore did not obtain bounds that hold with high probability).

%While we have no doubt on the correctness of Lehre's proof, for mathematical results it remains desirable that either the proof is so simple that everyone can easily check it or that the result with proof appears in a scientific journal, which through its reviewing process asserts that the result is correct. 

\section{Notation and Preliminaries}\label{sec:prelims}

In terms of basic notation, we write $[a..b] := \{z \in \Z \mid a \le z \le b\}$. We recall the definition of the \onemax benchmark function
\[\onemax : \{0,1\}^n \to \R; x = (x_1, \dots, x_n) \mapsto \sum_{i=1}^n x_i,\]
which counts the number of ones in the argument. We denote the \emph{Hamming distance} of two bit strings $x, y \in \{0,1\}^n$ by 
\[H(x,y) = |\{i \in [1..n] \mid x_i \neq y_i\}|.\]

The classic mutation operator \emph{standard bit mutation} creates an offspring by flipping each bit of the parent independently with some probability $p$, which is called \emph{mutation rate}.

We shall twice need the notion of stochastic domination and its relation to standard bit mutation, so we quickly collect these ingredients of our proofs. We refer to~\cite{Doerr19tcs} for more details on stochastic domination and its use in runtime analysis. 

For two real-valued random variables $X$ and $Y$, we say that $Y$ \emph{stochastically dominates} $X$, written as $X \preceq Y$, if for all $\lambda \in \R$ we have $\Pr[Y \le \lambda] \le \Pr[X \le \lambda]$. Stochastic domination is a very flexible way of saying that $Y$ is larger than $X$. It implies $E[X] \le E[Y]$, but not only this, we also have $E[f(X)] \le E[f(Y)]$ for any monotonically increasing function $f$. 

\begin{lemma}\label{lem:mondom}
  Let $X, Y$ be two random variables taking values in some set $\Omega \subseteq \R$. Let $f : \Omega \to \R$ be monotonically increasing, that is, we have $f(x) \le f(y)$ for all $x,y \in \Omega$ with $x \le y$. Then $E[f(X)] \le E[f(Y)]$. 
\end{lemma}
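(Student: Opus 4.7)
The plan is to reduce the claim to the standard coupling characterisation of stochastic domination. (The hypothesis $X \preceq Y$ is clearly intended to be part of the statement, as is evident from the surrounding discussion; without it the assertion is false.) First I would establish the coupling: there is a probability space carrying random variables $X'$ and $Y'$ with $X' \sim X$, $Y' \sim Y$, and $X' \le Y'$ pointwise. Granting this, the monotonicity hypothesis on $f$ yields $f(X') \le f(Y')$ pointwise, and taking expectations gives $E[f(X)] = E[f(X')] \le E[f(Y')] = E[f(Y)]$, which is exactly the desired inequality.

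To produce the coupling I would use the quantile-transform construction. Define the generalised inverse $F_Z^{-1}(u) := \inf\{t \in \R : \Pr[Z \le t] \ge u\}$ for $u \in (0,1)$ and any real random variable $Z$; a standard fact is that if $U$ is uniform on $(0,1)$, then $F_Z^{-1}(U) \sim Z$. The assumption $\Pr[Y \le t] \le \Pr[X \le t]$ for every $t \in \R$ translates to the pointwise inequality $F_X^{-1}(u) \le F_Y^{-1}(u)$ for all $u \in (0,1)$, since $\{t : \Pr[Y \le t] \ge u\} \subseteq \{t : \Pr[X \le t] \ge u\}$ and taking infima reverses inclusion. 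Setting $X' := F_X^{-1}(U)$ and $Y' := F_Y^{-1}(U)$ on a common probability space then delivers the required coupling with the correct marginals.

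The only point needing mild care is integrability: if both expectations are finite, linearity closes the argument; if $E[f(Y)] = +\infty$ the inequality is trivial, and if $E[f(X)] = -\infty$ likewise. In the applications made in the paper $f$ will be monotone and the random variables will take values in a bounded or nonnegative range, so expectations are always well-defined. Overall I do not anticipate any genuine obstacle here, as the lemma is essentially an immediate corollary of the coupling representation of stochastic domination; the proof collapses to two lines once the coupling is invoked, and the only substantive content is the construction of $X'$ and $Y'$ from a common uniform variable.
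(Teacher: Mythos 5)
Your proof is correct, but there is nothing in the paper to compare it against: Lemma~\ref{lem:mondom} is stated without proof, as a standard fact about stochastic domination, with the reader referred to~\cite{Doerr19tcs} for background. Your quantile-coupling argument is the textbook route and would serve as a complete proof. The one genuinely valuable observation you make is that the hypothesis $X \preceq Y$ is missing from the formal statement of the lemma (it appears only in the surrounding prose); without it the claim is false, so this is a (minor) slip in the paper that your reading correctly repairs. Two small points to keep in mind if you were to write this out in full: you should note that a monotone $f$ is automatically Borel measurable, so that $E[f(X')]$ makes sense, and that $X' = F_X^{-1}(U)$ lands in $\Omega$ almost surely because it has the law of $X$ -- both are immediate here since in all of the paper's applications $\Omega$ is finite. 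An alternative that avoids constructing a coupling altogether is to observe that monotonicity of $f$ gives $\Pr[f(Y) \le c] \le \Pr[f(X) \le c]$ for all $c$ (the preimage of $(-\infty,c]$ under $f$ is a down-set of $\Omega$), i.e.\ $f(X) \preceq f(Y)$, and then apply the layer-cake formula for expectations; this is marginally more delicate at the boundary of the down-set, which is precisely the fuss your coupling sidesteps.
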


Significantly improving over previous related arguments in~\cite[Section~5]{DrosteJW00iecon} and~\cite[Lemma~13]{DoerrJW12algo}, Witt showed the following natural domination argument \cite[Lemma~6.1]{Witt13} for offspring generated via standard bit mutation with mutation rate at most $\frac 12$. We note that the result is formulated in~\cite{Witt13} only for $x^* = (1, \dots, 1)$, but the proof in~\cite{Witt13} or a symmetry argument immediately shows the following general version. 

\begin{lemma}\label{lem:witt}
  Let $x^*, x, y \in \{0,1\}^n$ with $H(x,x^*) \ge H(y,x^*)$. Let $x'$ and $y'$ be random search points obtained from $x$ and $y$ via standard bit mutation with mutation rate $p \le \frac 12$. Then 
  \[H(x'x^*) \succeq H(y',x^*).\]
\end{lemma}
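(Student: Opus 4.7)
The plan is to prove pointwise domination under an explicit coupling, after two preliminary reductions that allow a clean single-bit move. Write $D(z) := \{i \in [1..n] : z_i \neq x^*_i\}$, so $H(z,x^*) = |D(z)|$. For $z' = \mut(z)$ obtained via standard bit mutation with rate $p$, each indicator $\mathbb{1}[z'_i \neq x^*_i]$ is independently Bernoulli($p$) if $i \notin D(z)$ and Bernoulli($1-p$) if $i \in D(z)$. Hence the distribution of $H(z',x^*)$ depends on $z$ only through $|D(z)|$. Since $|D(y)| \le |D(x)|$, I may therefore replace $y$ by any $\tilde y$ with $|D(\tilde y)| = |D(y)|$ and $D(\tilde y) \subseteq D(x)$ without changing the law of $H(y',x^*)$.

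Next, since $D(\tilde y) \subseteq D(x)$, I build a chain $\tilde y = z^{(0)}, z^{(1)}, \dots, z^{(k)} = x$, with $k = |D(x)| - |D(\tilde y)|$, where consecutive elements differ in exactly one coordinate and each step adds one index to $D$. Stochastic domination being transitive, it then suffices to prove the one-bit step: if $y$ and $x$ differ in exactly one coordinate $j$ with $y_j = x^*_j$ and $x_j \neq x^*_j$, then $H(x',x^*) \succeq H(y',x^*)$.

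For the one-bit step, I couple $x' = \mut(x)$ and $y' = \mut(y)$ as follows. For each $i \neq j$, draw a single flip indicator $F_i \sim \Bin(1,p)$ and apply it to both $x$ and $y$; since $x_i = y_i$, this forces $x'_i = y'_i$ and the contribution of coordinate $i$ is identical in $H(x',x^*)$ and $H(y',x^*)$. For coordinate $j$, the marginals of the flip indicators $F^x_j, F^y_j$ must be Bernoulli($p$), but I am free to choose their joint law. Because $p \le 1/2$ gives $\Pr[F^x_j = 1] + \Pr[F^y_j = 1] = 2p \le 1$, I can couple them disjointly: $(F^x_j, F^y_j) = (1,0)$ with probability $p$, $(0,1)$ with probability $p$, and $(0,0)$ with probability $1-2p$. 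Using that $x_j \neq x^*_j$ and $y_j = x^*_j$, the coordinate-$j$ contributions become $\mathbb{1}[x'_j \neq x^*_j] = 1 - F^x_j$ and $\mathbb{1}[y'_j \neq x^*_j] = F^y_j$, so their difference $1 - F^x_j - F^y_j$ is nonnegative almost surely. Summing over all coordinates gives $H(x',x^*) \ge H(y',x^*)$ pointwise, which implies the claimed stochastic domination. I expect the main subtlety to be the initial reduction to $D(\tilde y) \subseteq D(x)$; without it, the naive ``use the same flip pattern on both sides'' coupling yields only $E[H(x',x^*)] \ge E[H(y',x^*)]$, and one must genuinely exploit exchangeability of the coordinates to reduce to a nested configuration where a disjoint Bernoulli coupling is available.
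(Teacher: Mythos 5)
Your proof is correct. Note that the paper does not actually prove this lemma: it states it as a known result, citing Lemma~6.1 of Witt (2013) and remarking that the general version follows from that proof or a symmetry argument. Your argument is a complete, self-contained version of exactly that standard proof: you first observe that the law of $H(z',x^*)$ is that of $\Bin(d,1-p)+\Bin(n-d,p)$ with $d=H(z,x^*)$, hence depends on $z$ only through $d$, which justifies the reduction to nested disagreement sets $D(\tilde y)\subseteq D(x)$; you then reduce by transitivity to a single-coordinate exchange, where the claim amounts to a Bernoulli$(1-p)$ variable stochastically dominating a Bernoulli$(p)$ variable for $p\le\frac12$, realized by your disjoint coupling. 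You also correctly identify the one genuine subtlety, namely that the naive ``same flip pattern'' coupling fails pointwise when $D(y)\not\subseteq D(x)$, which is precisely why the preliminary reduction is needed. No gaps.
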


\section{Negative Multiplicative Drift}\label{sec:nddrift}

The following elementary result allows to prove lower bounds on the time to reach a target in the presence of multiplicative drift away from the target. While looking innocent, it has the potential to replace the more complicated lower bound arguments previously used in analyses of non-elitist algorithms. We discuss this brief{}ly at the end of this section.

\begin{lemma}[Negative multiplicative drift theorem]\label{lem:nddrift}
  Let $X_0, X_1, \dots$ be a random process in a finite subset of $\R_{\ge 0}$. Assume that there are $\Delta, \delta > 0$ such that for each $t \ge 0$, the following multiplicative drift condition with additive disturbance holds:
  \begin{equation}\label{eq:nddrift}
  E[X_{t+1}] \le (1 - \delta) E[X_t] + \Delta. 
  \end{equation}
  Assume further that $E[X_0] \le \frac{\Delta}{\delta}$. Then the following two assertions hold.
  \begin{itemize}
  \item For all $t \ge 0$, $E[X_t] \le \frac{\Delta}{\delta}$.
  \item Let $M > \frac{\Delta}{\delta}$ and $T = \min\{t \ge 0 \mid X_t \ge M\}$. Then for all integers $L \ge 0$, 
  \[\Pr[T \ge L] \ge 1 - L \, \frac{\Delta}{\delta M},\]
  and $E[T] \ge \frac{\delta M}{2\Delta} - \frac 12$.
  \end{itemize}
\end{lemma}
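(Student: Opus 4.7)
I would split the lemma into three parts and prove them in the order listed. First, the uniform bound $E[X_t] \le \Delta/\delta$ follows by a one-step induction on $t$: the base case is exactly the hypothesis of the lemma, and since $\Delta/\delta$ is the fixed point of the affine recursion $a \mapsto (1-\delta)a + \Delta$, plugging $E[X_t] \le \Delta/\delta$ into inequality~\eqref{eq:nddrift} immediately yields $E[X_{t+1}] \le (1-\delta)\tfrac{\Delta}{\delta} + \Delta = \tfrac{\Delta}{\delta}$, closing the induction.

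With this uniform bound in hand, Markov's inequality gives $\Pr[X_t \ge M] \le E[X_t]/M \le \Delta/(\delta M)$ for every $t \ge 0$, so a union bound over $t \in [0..L-1]$ produces $\Pr[T < L] \le L \Delta/(\delta M)$, which is exactly the claimed tail estimate $\Pr[T \ge L] \ge 1 - L\Delta/(\delta M)$.

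For the expectation I would apply the standard identity $E[T] = \sum_{L=1}^{\infty} \Pr[T \ge L]$, writing $K := \delta M/\Delta > 1$ for brevity. Plugging in the tail bound gives $E[T] \ge \sum_{L=1}^{\lfloor K \rfloor}(1 - L/K)$, and summing the arithmetic progression yields $\lfloor K \rfloor - \lfloor K \rfloor (\lfloor K \rfloor + 1)/(2K)$. An elementary rearrangement, writing $K = \lfloor K \rfloor + f$ with $f \in [0,1)$, reduces the desired inequality to $f(1-f) \ge 0$, and this delivers $E[T] \ge K/2 - 1/2 = \delta M/(2\Delta) - 1/2$. The only place I expect even minor friction is this rounding step when $K$ is not an integer; everything else is a two-line deduction from the recursion and from Markov's inequality. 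Conceptually, tightness is unsurprising because $\Delta/\delta$ is simultaneously the fixed point of the drift recursion and the quantity whose Markov tail we are summing.
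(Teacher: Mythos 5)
Your proposal is correct and follows essentially the same route as the paper: induction to the fixed point $\Delta/\delta$, a Markov-type estimate giving $\Pr[T<L]\le L\Delta/(\delta M)$, and the tail-sum identity with the same rounding argument (your $f(1-f)\ge 0$ is exactly the paper's $\eps \ge \eps^2$). The only cosmetic difference is that you apply Markov's inequality to each $X_t$ separately and then union-bound, whereas the paper applies Markov once to the non-negative sum $X_0+\dots+X_{L-1}$; the resulting bound is identical.
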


The proof is an easy computation of expectations and an application of Markov's inequality similar to the direct proof of the multiplicative drift theorem in~\cite{DoerrG13algo}. We do not see a reason why the result should not also hold for processes taking more than a finite number of values, but since we are only interested in the finite setting, we spare us the more complicated world of continuous probability spaces.

\begin{proof}[Proof of Lemma~\ref{lem:nddrift}]
  If $E[X_t] \le \frac{\Delta}{\delta}$, then $E[X_{t+1}] \le (1-\delta) E[X_t] + \Delta \le {(1-\delta) \frac{\Delta}{\delta}} + \Delta = \frac{\Delta}{\delta}$ by~\eqref{eq:nddrift}. Hence the first claim follows by induction. To prove the second claim, we compute 
  \[\Pr[T < L] \le \Pr[X_0 + \dots + X_{L-1} \ge M] \le \frac{E[X_0 + \dots + X_{L-1}]}{M} \le \frac{L\Delta}{\delta M},\] where the middle inequality follows from Markov's inequality and the fact that the $X_t$ by assumption are all non-negative. From this estimate, using the shorthand $s = \lfloor \frac{\delta M}{\Delta} \rfloor$, we compute $E[T] = \sum_{t = 1}^\infty \Pr[T \ge t] \ge \sum_{t = 1}^{s} (1 - \frac{t\Delta}{\delta M}) = s - \frac 12 s (s+1) \frac{\Delta}{\delta M} \ge \frac{\delta M}{2\Delta} - \frac 12$, where the first equality is a standard way to express the expectation of a random variable taking non-negative integral values % (see, e.g.,~\cite[Lemma~1.6.1]{Doerr20bookchapter})
   and the last inequality is an elementary estimate that can be verified as follows. Let $\eps = \frac{\delta M}{\Delta} - s$. Then $s - \frac 12 s (s+1) \frac{\Delta}{\delta M} \ge \frac{\delta M}{2\Delta} - \frac 12$ is the same as $s - \frac{s (s+1)}{2(s+\eps)} \ge \frac 12 (s + \eps)  - \frac 12$, which is equivalent to 
   \[ 2s(s+\eps) -  s (s+1) + (s+\eps) \ge (s+\eps)^2\]
	since $s + \eps \ge 0$.  Now the left-hand side is equal to $s^2 + 2s\eps + \eps$, which is not smaller than $s^2 + 2s\eps + \eps^2$, since $\eps \in [0,1)$, and this is just the right-hand side.
\end{proof}

%We note that the same proof could also be employed to analyze the situation that the process starts with $X_0 > \frac{\Delta}{\delta}$, but the resulting estimates would be more technical and we see currently no application for this in the theory of evolutionary computation. Also, the condition of a finite state space could be removed at the prices of more technical effort, which again we do not see justified for the analysis of population based algorithms in discrete search spaces. 
We %finally 
note that in the typical application of this result (as in the proof of Theorem~\ref{thm:main} below), we expect to see the condition that for all $t \ge 0$, 
\begin{equation}\label{eq:ndplus}
  E[X_{t+1} \mid X_t] \le (1 - \delta) X_t + \Delta.
\end{equation}
Clearly, this condition implies~\eqref{eq:nddrift} by the law of total expectation.

We now argue that our negative multiplicative drift theorem is likely to find applications beyond ours to the negative drift in populations method in the following section. To this aim, we regard two classic lower bound analyses of non-elitist algorithms and point out where our drift theorem would have eased the analysis.

In~\cite{NeumannOW09}, Neumann, Oliveto, and Witt show that the variant of the simple genetic algorithm (simple GA) not using crossover needs time $2^{n^{1 - O(1 / \log\log n)}}$ to optimize the simple \onemax benchmark.
% (we recall that we shall show in Section~\ref{sec:fp} that this problem can also be analyzed via the negative drift in populations method, so this paragraph is only to show the applicability of our drift result in such scenarios). 
The key argument in~\cite{NeumannOW09} is as follows. The potential $X_t$ of the population $P^{(t)}$ in iteration $t$ is defined as $X_t = \sum_{x \in P^{(t)}} 8^{\onemax(x)}$. For this potential, it is shown~\cite[Lemma~7]{NeumannOW09} that if $X_t \ge 8^{0.996n}$, then $E[X_{t+1}] \le (1-\delta) X_t$ for some constant $\delta>0$. By bluntly estimating $E[X_{t+1}]$ in the case that $X_t < 8^{0.996n}$, this bound could easily be extended to $E[X_{t+1} | X_t] \le (1-\delta) X_t + \Delta$ for some number $\Delta$. This suffices to employ our negative drift theorem and obtain the desired lower bound. Without our drift theorem at hand, in~\cite{NeumannOW09} the potential $Y_t = \log_8(X_t)$ was considered, it was argued that it displays an additive drift away from the target and that $Y_t$ satisfies certain concentration statements necessary for the subsequent use of a negative drift theorem for additive drift. 

A second example where we feel that our drift theorem can ease the analysis is the work of Oliveto and Witt~\cite{OlivetoW14,OlivetoW15} on the simple GA with crossover optimizing \onemax. Due to the use of crossover, this work is much more involved, so shall not go into detail and simply point the reader to the location where negative drift occurs. In Lemma~19 of~\cite{OlivetoW15}, a multiplicative drift statement (away from the target) is proven. To use a negative drift theorem for additive drift (Theorem~2 in~\cite{OlivetoW15}), in the proof of Lemma~20 the logarithm of the original process is regarded. So here again, we think that a direct application of our drift theorem would have eased the analysis.

\section{Negative Drift in Populations Revisited}\label{sec:negpop}

In this section, we use our negative multiplicative drift result and some more arguments to rework Lehre's negative drift in populations method~\cite{Lehre10} and obtain Theorem~\ref{thm:main} further below. This method allows to analyze a broad class of evolutionary algorithms, namely all that can be described via the following type of population process.

\subsection{Population Selection-Mutation Processes}\label{ssec:psm}

A \emph{population selection-mutation (PSM) process} (called \emph{population selection-variation algorithm} in~\cite{Lehre10}) is the following type of random process. Let $\Omega$ be a finite set. We call $\Omega$ the \emph{search space} and its elements \emph{solution candidates} or \emph{individuals}. % (this was called state space in~\cite{Lehre10}, but we prefer to reserve the name \emph{state} for the states of the PSM process). 
Let $\lambda \in \N$ be called the \emph{population size} of the process. An ordered multi-set of cardinality $\lambda$, in other words, a $\lambda$-tuple, over the search space $\Omega$ is called a \emph{population}. Let $\calP = \Omega^\lambda$ be the set of all populations. For $P \in \calP$, we write $P_1, \dots, P_\lambda$ to denote the elements of $P$. We also write $x \in P$ to denote that there is an $i \in [1..\lambda]$ such that $x = P_i$.

A PSM process starts with some, possibly random, population $P^{(0)}$. In each iteration $t = 1, 2, \dots$, a new population $P^{(t)}$ is generated from the previous one $P^{(t-1)}$ as follows. Via a (possibly) randomized \emph{selection operator} $\sel(\cdot)$, a $\lambda$-tuple of individuals is selected and then each of them creates an offspring through the application of a randomized \emph{mutation operator} $\mut(\cdot)$. 

The \emph{selection operator} can be arbitrary except that it only selects individuals from $P^{(t-1)}$. In particular, we do not assume that the selected individuals are independent. Formally speaking, the outcome of the selection process is a random $\lambda$-tuple $Q = \sel(P^{(t-1)}) \in [1..\lambda]^\lambda$ such that $P^{(t-1)}_{Q_1}, \dots, P^{(t-1)}_{Q_\lambda}$ are the selected parents.

From each selected parent $P^{(t-1)}_{Q_i}$, a single offspring $P^{(t)}_i$ is generated via a randomized \emph{mutation operator} $P^{(t)}_i = \mut(P^{(t-1)}_{Q_i})$. Formally speaking, for each $x \in \Omega$, $\mut(x)$ is a probability distribution on $\Omega$ and we write $y = \mut(x)$ to indicate that $y$ is sampled from this distribution. We assume that each sample, that is, each call of a mutation operator, uses independent randomness. With this notation, we can write the new population as $P^{(t)} = \big(\mathopen{}\mut(P^{(t-1)}_{Q_1}), \dots, \mut(P^{(t-1)}_{Q_\lambda})\big)$ with $Q = \sel(P^{(t-1)})$. 
From the definition it is clear that a PSM process is a Markov process with state space $\calP$. A pseudocode description of PSM processes is given in Algorithm~\ref{alg:psm}.

\begin{algorithm2e}%
	\For{$t = 1, 2, \ldots$}{
	  $(Q_1, \dots, Q_\lambda) \assign \sel(P^{(t-1)})$\;
	  $P^{(t)} \assign \big(\mathopen{}\mut(P^{(t-1)}_{Q_1}), \dots, \mut(P^{(t-1)}_{Q_\lambda})\big)$\;
  }
\caption{A PSM process with search space $\Omega$, population size $\lambda$, selection operator $\sel(\cdot)$ and mutation operator $\mut(\cdot)$, initialized with $P^{(0)} \in \Omega^\lambda$.}
\label{alg:psm}
\end{algorithm2e}
 
The following characteristic of the selection operator was found to be crucial for the analysis of PSM processes in~\cite{Lehre10}. Let $P \in \calP$ and $i \in [1..\lambda]$. Then the random variable $R(i,P) = |\{j \in [1..\lambda] \mid \sel(P)_j = P_i\}|$, called \emph{reproduction number of the $i$-th individual in $P$}, denotes the number of times $P_i$ was selected from $P$ as parent. Its expectation $E[R(i,P)]$ is called \emph{reproduction rate}.

\textbf{Example:} We now describe how the \mclea fits into this framework. That it fits into this framework and that the reproduction number is $\frac \lambda \mu$ was already stated in~\cite{Lehre10}, but how exactly this works out, to the best of our knowledge, was never made precise so far, and is also not totally trivial. 

We specify that when talking about the \mclea, given in pseudocode in Algorithm~\ref{alg:mclea}, we mean the basic EA which starts with a parent population of $\mu$ search points chosen independently and uniformly at random from $\{0,1\}^n$. In each iteration, $\lambda$ offspring are generated, each by selecting a parent individual uniformly at random (with repetition) and mutating it via standard bit mutation with mutation rate $p$. The next parent population is selected from these $\lambda$ offspring by taking $\mu$ best individuals, breaking ties randomly. 

\begin{algorithm2e}%
	Initialize $P^{(0)}$ with $\mu$ individuals chosen independently and uniformly at random from $\{0,1\}^n$\;
	\For{$t = 1, 2, \ldots$}{
    \For{$i \in [1..\lambda]$}{
      Select $x \in P^{(t-1)}$ uniformly at random\;
      Generate $y^{(i)}$ from $x$ via standard bit mutation\;
      }
		Select $P^{(t)}$ as sub-multiset of $\{y^{(1)}, \dots, y^{(\lambda)}\}$ with maximal fitness, breaking ties randomly\;
  }
\caption{The \mclea to maximize a function ${f : \{0,1\}^n \to \R}$.}
\label{alg:mclea}
\end{algorithm2e}

This algorithm can be modeled as a PSM process with population size $\lambda$ (not $\mu$). To do so, we need a slightly non-standard initialization of the population. We generate $P^{(0)}$ by first taking $\mu$ random search points and then generating each $P^{(0)}_i$, $i \in [1..\lambda]$, by choosing (with replacement) a random one of the $\mu$ base individuals and mutating it. With this definition, each individual in $P^{(0)}$ is uniformly distributed in $\{0,1\}^n$, but these individuals are not independent.

Given a population $P$ consisting of $\lambda$ individuals, the selection operator first selects a set $P_0$ of $\mu$ best individuals from $P$, breaking ties randomly. Formally speaking, this is a tuple $(i_1, \dots, i_\mu)$ of indices in $[1..\lambda]$. Then a random vector $(j_1, \dots, j_\lambda) \in [1..\mu]^\lambda$ is chosen and the selected parents are taken as $Q = (P_{i_{j_1}}, \dots, P_{i_{j_\lambda}})$. The next population $P'$ is obtained by applying the mutation operator to each of these, that is, $P' = (\mut(P_{i_{j_1}}), \dots, \mut(P_{i_{j_\lambda}}))$, where $\mut(\cdot)$ denotes standard bit mutation with mutation rate $p$.

From this description, it is clear that each individual of each population of the \mclea has a reproduction rate of $\frac {\lambda}{\mu}$.

\subsection{Our ``Negative Drift in Populations'' Result}

We prove the following version of the negative drift in populations method.

\begin{theorem}\label{thm:main}
  Consider a PSM process $(P^{(t)})_{t \ge 0}$ with associated reproduction numbers $R(\cdot,\cdot)$ as defined in Section~\ref{ssec:psm}. Let $g : \Omega \to \Z_{\ge 0}$, called potential function, and $a, b \in \Z_{\ge 0}$ with $a \le b$. Assume that for all $x \in P^{(0)}$ we have $g(x) \ge b$. Let $T = \min\{t \ge 0 \mid \exists i \in [1..\lambda] : g(P^{(t)}_i) \le a\}$ the first time we have a search point with potential $a$ or less in the population. Assume that the following three conditions are satisfied.
  \begin{enumerate}
  \item There is an $\alpha \ge 1$ such that for all populations $P \in \calP$ with $\min\{g(P_i) \mid i \in [1..\lambda]\} > a$ and all $i \in [1..\lambda]$ with $g(P_i) < b$, we have $E[R(i,P)] \le \alpha$.
  \item There is a $\kappa > 0$ and a $0 < \delta < 1$ such that for all $x \in \Omega$ with $a < g(x) < b$ we have \[E[\exp(-\kappa g(\mut(x)))] \le \frac 1 \alpha (1 - \delta) \exp(-\kappa g(x)).\]
  \item There is a $D \ge \delta$ such for all $x \in \Omega$ with $g(x) \ge b$, we have 
  \[E[\exp(-\kappa g(\mut(x)))] \le D \exp(-\kappa b).\]  
  \end{enumerate}
  Then 
  \begin{itemize}
  \item $E[T] \ge \frac{\delta}{2D \lambda} \exp(\kappa(b-a)) - \frac 12$, and 
  \item for all $L \ge 1$, we have $\Pr[T < L] \le L \lambda \frac{D}{\delta} \exp(-\kappa (b-a))$.
  \end{itemize}
\end{theorem}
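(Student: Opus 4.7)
The plan is to introduce the exponential potential $X_t := \sum_{i=1}^\lambda \exp(-\kappa g(P^{(t)}_i))$ and the threshold $M := \exp(-\kappa a)$, and then argue two things: (i) the event $\{T \le t\}$ implies $X_t \ge M$, since a single individual with $g(P^{(t)}_i) \le a$ already contributes $\exp(-\kappa g(P^{(t)}_i)) \ge \exp(-\kappa a) = M$ to the sum, so a lower bound on the first time $X_t$ crosses $M$ is automatically a lower bound on $T$; and (ii) on the event $B_t := \{T > t\}$, the process $X_t$ obeys the negative multiplicative drift condition needed to invoke Lemma~\ref{lem:nddrift}. All the quantitative content of the theorem will then fall out from that lemma's bookkeeping once the correct values of $\Delta$, $\delta$, and $M$ are identified.

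For the one-step drift bound I would condition on $P^{(t)} = P$ with every individual of $P$ satisfying $g > a$, write $Q := \sel(P)$ and use the independence of the mutations to obtain
\[E[X_{t+1}\mid P^{(t)} = P] \;=\; \sum_{j=1}^\lambda E[R(j,P)]\, E[\exp(-\kappa g(\mut(P_j)))],\]
and then split the sum according to whether $g(P_j) < b$ or $g(P_j) \ge b$. For the indices with $a < g(P_j) < b$, condition~1 gives $E[R(j,P)] \le \alpha$ and condition~2 gives $E[\exp(-\kappa g(\mut(P_j)))] \le \tfrac{1}{\alpha}(1-\delta)\exp(-\kappa g(P_j))$, so the factors of $\alpha$ cancel and the contribution is at most $(1-\delta)\exp(-\kappa g(P_j))$. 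For the indices with $g(P_j) \ge b$, condition~3 bounds the mutation term by the single value $D\exp(-\kappa b)$ independently of $j$, so I can aggregate these contributions using the identity $\sum_j R(j,P) = \lambda$ to get at most $\lambda D\exp(-\kappa b)$. Combining the two pieces yields
\[E[X_{t+1}\mid P^{(t)}] \;\le\; (1-\delta)\, X_t + \Delta, \qquad \Delta := \lambda D\exp(-\kappa b),\]
valid on $B_t$.

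To turn this into a bound on $T$ I would work with $a_t := E[X_t\, \mathbf{1}_{B_{t-1}}]$. Since $B_t \subseteq B_{t-1}$ and $B_t$ is measurable with respect to $P^{(t)}$, the one-step estimate gives the recursion $a_{t+1} \le (1-\delta)a_t + \Delta$, and the assumption $D \ge \delta$ combined with $E[X_0] \le \lambda\exp(-\kappa b) \le \Delta/\delta$ lets the induction from the proof of Lemma~\ref{lem:nddrift} go through, yielding $a_t \le \Delta/\delta$ for all $t\ge 1$. On $\{T = t\}$ we have $B_{t-1}$ and $X_t \ge M$ (by~(i)), so Markov gives $\Pr[T=t] \le a_t/M \le (\Delta/\delta)/M$. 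Summing over $t$ produces the tail bound
\[\Pr[T < L] \;\le\; \tfrac{L\Delta}{\delta M} \;=\; L\lambda \tfrac{D}{\delta}\exp(-\kappa(b-a)),\]
and plugging this into $E[T] = \sum_{L\ge 1}\Pr[T\ge L]$, truncating the sum at $L^\ast := \lfloor \delta M/\Delta\rfloor$ and performing the same elementary estimate as in Lemma~\ref{lem:nddrift} recovers $E[T] \ge \tfrac{\delta}{2D\lambda}\exp(\kappa(b-a)) - \tfrac{1}{2}$.

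The main obstacle is the drift step, and in particular the treatment of the ``heavy'' indices with $g(P_j) \ge b$ for which the theorem deliberately imposes no direct control on the reproduction rate. The crucial observation is that condition~3 bounds their mutation contribution by a single value independent of the index, so the missing reproduction-rate bound can be absorbed by aggregating via $\sum_j E[R(j,P)] \le \lambda$; this turns what would otherwise be an unbounded multiplicative effect into the additive term $\Delta$, which is precisely the form handled by Lemma~\ref{lem:nddrift} and explains why that lemma, rather than a purely multiplicative negative drift theorem, is the right tool here.
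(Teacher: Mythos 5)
Your proposal is correct and follows essentially the same route as the paper: the same exponential potential $X_t=\sum_i\exp(-\kappa g(P^{(t)}_i))$, the same decomposition of the one-step drift via the reproduction numbers (cancelling $\alpha$ on the light indices and aggregating the heavy indices through $\sum_j R(j,P)=\lambda$ into the additive term $\Delta=\lambda D\exp(-\kappa b)$), and the same application of Lemma~\ref{lem:nddrift} with $M=\exp(-\kappa a)$. Your explicit restriction of the drift inequality to the event $\{T>t\}$ via the indicators $\mathbf{1}_{B_{t-1}}$ is a slightly more careful treatment of a technicality the paper's proof passes over silently, but it does not change the argument.
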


Before proceeding with the proof, we compare our result with Theorem~1 of~\cite{Lehre10}. We first note that, apart from a technicality which we discuss toward the end of this comparison, the assumptions of our result are weaker than the ones in~\cite{Lehre10} since we do not need the technical fourth and fifth assumption of~\cite{Lehre10}, which in our notation would read as follows.
\begin{itemize}
\item There is a $\delta_2 > 0$ such that for all $i \in [a..b]$ and all $k, \ell \in \Z$ with $1 \le k + \ell$ and all $x, y \in \Omega$ with $g(x) = i$ and $g(y) = i-\ell$ we have 
\begin{align*}
&\Pr[g(\mut(x)) = i - \ell \wedge g(\mut(y)) = i - \ell - k] \\
&\quad\le \exp(\kappa (1-\delta_2) (b-a)) \Pr[g(\mut(x)) = i -k - \ell].
\end{align*}
\item There is a $\delta_3 > 0$ such that for all $i, j, k, \ell \in \Z$ with $a \le i \le b$ and $1 \le k + \ell \le j$ and all $x, y \in \Omega$ with $g(x) = i$ and $g(y) = i - k$ we have 
\[\Pr[g(\mut(x)) = i - j] \le \delta_3 \Pr[g(\mut(y)) = i - k - \ell].\]
\end{itemize} 
The assertion of our result is of the same type as in~\cite{Lehre10}, but stronger in terms of numbers. For the probability $\Pr[T < L]$ to find a potential of at most $a$ in time less than $L$, a bound of \[{O(\lambda L^2 D \,(b-a) \exp(-\kappa \delta_2 (b-a)))}\] is shown in~\cite{Lehre10}. Hence our result is smaller by a factor of ${\Omega(L (b-a) \exp(-\kappa (1-\delta_2) (b-a))}$. In addition, our result is non-asymptotic, that is, the lower bound contains no asymptotic notation or unspecified constants. %, in particular, the important constant~$\delta_2$, which determines the base of the exponential function.

The one point where Lehre's~\cite{Lehre10} result potentially is stronger is that it needs assumptions only on the ``average drift'' from the random search point at time $t$ conditional on having a fixed potential, whereas we require the same bound on the ``point-wise drift'', that is, conditional on the current search point being equal to a particular search point of this potential. Let us make this more precise.  Lehre uses the notation $(X_t)_{t \ge 0}$ to denote the Markov process on $\Omega$ associated with the mutation operator (unfortunately, it is not said in~\cite{Lehre10} what is $X_0$, that is, how this process is started). Then $\Delta_t(i) = (g(X_{t+1} - g(X_t) \mid g(X_t) = i)$ defines the potential gain in step $t$ when the current state has potential $i$. With this notation, instead of our second and third condition, Lehre~\cite{Lehre10} requires only the weaker conditions (here again translated into our notation).
\begin{itemize}
\item[(ii')] For all $t \ge 0$ and all $a < i < b$, $E[\exp(-\kappa \Delta_t(i))] < \frac 1\alpha (1-\delta)$.
\item[(iii')] For all $t \ge 0$, $E[\exp(-\kappa (g(X_{t+1}) - b)) \mid g(X_t) \ge b] < D$.
\end{itemize}

So Lehre only requires that the random individual at time $t$, conditional on having a certain potential, gives rise to a certain drift, whereas we require that each particular individual with this potential gives rise to this drift. On the formal level, Lehre's condition is much weaker than ours (assuming that the unclear point of what is $X_0$ can be fixed). That said, to exploit such weaker conditions, one would need to be able to compute such average drifts and they would need to be smaller than the worst-case point-wise drift. We are not aware of many examples where average drift was successfully used in drift analysis (one is J\"agersk\"upper's remarkable analysis of the linear functions problem~\cite{Jagerskupper08}) despite the fact that many classic drift theorems only require conditions on the average drift to hold.

We now prove Theorem~\ref{thm:main}. Before stating the formal proof, we describe on a high level its main ingredients and how it differs from Lehre's proof. 

The main challenge when using drift analysis is designing a potential function that suitably measures the progress. For simple hillclimbers and optimization problems, the fitness of the current solution may suffice, but already the analysis of the \oea on linear functions resisted such easy approaches~\cite{HeY01,DrosteJW02,DoerrJW12algo,Witt13}. For population-based algorithms, the additional challenge is to capture the quality of the whole population in a single number. We note at this point that the notion of ``negative drift in populations'' was used in Lehre to informally describe the characteristic of the population processes regarded, but drift analysis as a mathematical tool was employed only on the level of single individuals and the resulting findings were lifted to the whole population via advanced tools like branching processes and eigenvalue arguments. 

To prove upper bounds, in~\cite{Witt06,ChenHSCY09,Lehre11,DangL16algo,CorusDEL18,AntipovDFH18,DoerrK19}, implicitly or explicitly potential functions were used that build on the fitness of the best individual in the population and the number of individuals having this fitness. Regarding only the current-best individuals, these potential functions might not be suitable for lower bound proofs. 

The lower bound proofs in~\cite{NeumannOW09,OlivetoW14,OlivetoW15,AntipovDY19} all define a natural potential for single individuals, namely the Hamming distance to the optimum, and then lift this potential to populations by summing over all individuals an exponential transformation of their base potential (this ingenious definition was, to the best of our knowledge, not known in the theory of evolutionary algorithms before the work of Neumann, Oliveto, and Witt~\cite{NeumannOW09}). This is the type of potential we shall use as well, and given the assumptions of Theorem~\ref{thm:main}, it is not surprising that $\sum_{x \in P} \exp(-\kappa g(x))$ is a good choice. For this potential, we shall then show with only mild effort that it satisfies the assumptions of our drift theorem, which yields the desired lower bounds on the runtime (using that a single good solution in the population already requires a very high potential due to the exponential scaling). We now give the details of this proof idea.

\begin{proof}[Proof of Theorem~\ref{thm:main}]
  We consider the process $(X_t)_{t \geq 0}$ defined by $X_t = \sum_{i=1}^\lambda \exp(-\kappa g(P^{(t)}_i))$. To apply drift arguments, we first analyze the expected state after one iteration, that is, $E[X_t \mid X_{t-1}]$. To this end, let us consider a fixed parent population $P = P^{(t-1)}$ in iteration $t$. Let $Q = \sel(P)$ be the indices of the individuals selected for generating offspring. % and let $P' = P^{(t)}$ be the new population obtained from mutating the selected individuals. 
  
  We first condition on $Q$ (and as always on $P$), that is, we regard only the probability space defined via the mutation operator, and compute
  \begin{align*}
  E[X_{t} \mid Q] & = E\left[\sum_{j = 1}^\lambda \exp(-\kappa g(\mut(P_{Q_j})))\right] \\
  & = \sum_{i = 1}^\lambda (R(i,P) \mid Q) E[\exp(-\kappa g(\mut(P_i)))].
  \end{align*}
  Not anymore conditioning on $Q$, using the law of total expectation, using the assumptions~(ii) and~(iii) on the drift from mutation, and finally using assumption~(i) on the reproduction number and the trivial fact that $\sum_{i=1}^\lambda R(i,P) = \lambda$, we have
  \begin{align*}
  E[X_t] & = E_Q[E[X_t \mid Q]] \\
  & = \sum_{i = 1}^\lambda E[R(i,P)] E[\exp(-\kappa g(\mut(P_i)))]\\
  &\le \sum_{i : g(P_i) < b} E[R(i,P)] \tfrac 1 \alpha (1-\delta) \exp(-\kappa g(P_i)) \\ 
  & \quad\, + \sum_{i : g(P_i) \ge b} E[R(i,P)] D \exp(-\kappa b)\\
  & \le \sum_{P_i : g(P_i) < b} \alpha \cdot \tfrac 1 \alpha (1-\delta) \exp(-\kappa g(P_i))
   + \lambda \cdot D \exp(-\kappa b)\\
  & \le (1-\delta) X_{t-1} + \lambda D \exp(-\kappa b)  
\end{align*}
and recall that this is conditional on $P^{(t-1)}$, hence also on $X_{t-1}$. 

Let $\Delta = \lambda D \exp(-\kappa b)$. Since $P^{(0)}$ contains no individual with potential below~$b$, we have $X_0 \le \lambda \exp(-\kappa b) = \frac{\Delta}{D} \le \frac{\Delta}{\delta}$. Hence also the assumption $E[X_0] \le \frac{\Delta}{\delta}$ of Lemma~\ref{lem:nddrift} is fulfilled. 

Let $M = \exp(-\kappa a)$ and $T' := \min\{t \ge 0 \mid X_t \ge M\}$. Note that $T$, the first time to have an individual with potential at most $a$ in the population, is at least $T'$. Now the negative multiplicative drift theorem (Lemma~\ref{lem:nddrift}) gives 
\begin{align*}
  &\Pr[T < L] \le \Pr[T' < L] \le \frac{L\Delta}{M\delta} = L \lambda D \frac{\exp(-\kappa (b-a))}{\delta}, \\
  &E[T] \ge E[T'] \ge \frac{\delta M}{2 \Delta} - \frac 12 = \frac{\delta}{2D \lambda} \exp(\kappa(b-a)) - \frac 12.\\&\qedhere
\end{align*} 
\end{proof}

We note that the proof above actually shows the following slightly stronger statement, which can be useful when working with random initial populations (as, e.g., in the following section).

\begin{theorem}\label{thm:main2}
  Theorem~\ref{thm:main} remains valid when the assumption that all initial individuals have potential at least $b$ is replaced by the assumption $\sum_{i=1}^\lambda E[\exp(-\kappa g(P^{(0)}_i))] \le \frac{\lambda D \exp(-\kappa b)}{\delta}$.
\end{theorem}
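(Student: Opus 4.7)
The plan is to observe that the proof of Theorem~\ref{thm:main} uses the hypothesis ``all initial individuals have potential at least $b$'' at exactly one place: to verify the starting condition $E[X_0] \le \Delta/\delta$ of the negative multiplicative drift theorem (Lemma~\ref{lem:nddrift}), with $\Delta = \lambda D \exp(-\kappa b)$ and $X_t = \sum_{i=1}^\lambda \exp(-\kappa g(P^{(t)}_i))$. The new assumption is literally the statement $E[X_0] \le \lambda D \exp(-\kappa b)/\delta = \Delta/\delta$, so it plugs in directly.

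First, I would revisit the chain of (in)equalities that leads to the one-step drift estimate $E[X_t \mid P^{(t-1)}] \le (1-\delta)X_{t-1} + \Delta$. This derivation uses only the three assumptions (i)--(iii) of Theorem~\ref{thm:main}, together with the law of total expectation and the identity $\sum_{i=1}^\lambda R(i,P) = \lambda$; it is completely independent of how $P^{(0)}$ is distributed. Hence the drift condition~\eqref{eq:ndplus}, and a fortiori condition~\eqref{eq:nddrift}, continue to hold unchanged under the weakened initialization hypothesis.

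Second, I would verify the remaining hypothesis of Lemma~\ref{lem:nddrift}. By linearity of expectation and the new assumption,
\[
E[X_0] = \sum_{i=1}^\lambda E\bigl[\exp(-\kappa g(P^{(0)}_i))\bigr] \le \frac{\lambda D \exp(-\kappa b)}{\delta} = \frac{\Delta}{\delta}.
\]
Thus the starting condition of the negative multiplicative drift theorem is met, which replaces (and is implied by) the previous deterministic bound $X_0 \le \lambda \exp(-\kappa b) \le \Delta/\delta$.

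Third, with both ingredients in place, the remainder of the argument is identical to that of Theorem~\ref{thm:main}: setting $M = \exp(-\kappa a)$ and $T' = \min\{t \ge 0 \mid X_t \ge M\} \le T$, Lemma~\ref{lem:nddrift} delivers $\Pr[T < L] \le \Pr[T' < L] \le L\Delta/(\delta M) = L\lambda (D/\delta)\exp(-\kappa(b-a))$ and $E[T] \ge E[T'] \ge \delta M/(2\Delta) - 1/2 = (\delta/(2D\lambda))\exp(\kappa(b-a)) - 1/2$. There is no genuine obstacle here; the only point requiring care is to confirm that nothing else in the original proof silently relied on the stronger initialization, which the inspection above rules out.
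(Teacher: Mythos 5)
Your proposal is correct and matches the paper's own argument: the paper proves Theorem~\ref{thm:main2} simply by observing that the proof of Theorem~\ref{thm:main} uses the initialization assumption only to establish $E[X_0] \le \Delta/\delta$, which is exactly what the weakened hypothesis provides. Your careful check that the drift derivation is independent of the distribution of $P^{(0)}$ is precisely the inspection the paper implicitly relies on.
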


\section{Processes Using Standard Bit Mutation}

Since many EAs use standard bit mutation, as in~\cite{Lehre10} we now simplify our main result for processes using standard bit mutation and for $g$ being the Hamming distance to a target solution. Hence in this section, we have $\Omega = \{0,1\}^n$ and $y = \mut(x)$ is obtained from $x$ by flipping each bit of $x$ independently with probability $p$. Since our results are non-asymptotic, we can work with any $p \le \frac 12$. 

\begin{theorem}\label{thm:sbm}
  Consider a PSM process (see Section~\ref{ssec:psm}) with search space $\Omega = \{0,1\}^n$, using standard bit mutation with mutation rate $p \in [0,\frac 12]$ as mutation operator, and such that $P^{(0)}_i$ is uniformly distributed in $\Omega$ for each $i \in [1..\lambda]$ (possibly with dependencies among the individuals). Let $x^* \in \Omega$ be the target of the process. For all $x \in \Omega$, let $g(x) := H(x,x^*)$ denote the Hamming distance from the target. 
  
  Let $\alpha \ge 1$ and $0 < \delta < 1$ such that $\ln(\frac{\alpha}{1-\delta}) < pn$, that is, such that $1 - \frac{1}{pn}\ln(\frac{\alpha}{1-\delta}) =: \eps > 0$. Let $B = \frac{2}{\eps}$. Let $a, b$ be integers such that $0 \le a < b$ and $b \le \tilde b := n \frac{1}{B^2 - 1}$. 

 Selection condition: Assume that for all populations $P \in \calP$ with $\min\{g(P_i) \mid i \in [1..\lambda]\} > a$ and all $i \in [1..\lambda]$ with $g(P_i) < b$, we have $E[R(i,P)] \le \alpha$.
 
 Then the first time $T := \min\{t \ge 0 \mid \exists i \in [1..\lambda] : g(P^{(t)}_i) \le a\}$ that the population contains an individual in distance $a$ or less from $x^*$ satisfies  
  \begin{align*}
  E[T] &\ge \frac{1}{2\lambda} \min\left\{\frac{\delta \alpha}{1-\delta}, 1\right\} \exp\left(\ln\left(\frac{2}{1 - \frac{1}{pn} \ln(\frac{\alpha}{1-\delta})}\right) (b-a) \right)  - \frac 12,\\
  \Pr[T < L] & \le L \lambda \max\left\{\frac{1-\delta}{\delta \alpha}, 1\right\} \exp\left(- \ln\left(\frac{2}{1 - \frac{1}{pn} \ln(\frac{\alpha}{1-\delta})}\right) (b-a) \right).
  \end{align*}
\end{theorem}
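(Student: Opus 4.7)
The plan is to derive Theorem~\ref{thm:sbm} as a specialization of Theorem~\ref{thm:main2}, by choosing $\kappa := \ln B$ and verifying the three conditions of the general result. Condition (i) is handed to us as a hypothesis (the selection condition), so the actual work is the computation of the mutation drift bounds (conditions (ii) and (iii)) and checking the initialization condition of Theorem~\ref{thm:main2}.

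The main computational step is the mutation drift. For $x$ with $g(x) = i$, standard bit mutation produces $\mut(x)$ whose Hamming distance to $x^*$ equals $(i - X_1) + X_2$, where $X_1 \sim \Bin(i,p)$ and $X_2 \sim \Bin(n-i,p)$ are independent (each of the $i$ bits that disagree with $x^*$ flips with probability $p$, and each of the $n-i$ agreeing bits flips with probability $p$). The binomial moment generating function then gives
\[E[\exp(-\kappa g(\mut(x)))] = e^{-\kappa i}\bigl(1 + p(B-1)\bigr)^{i}\bigl(1 - p(B-1)/B\bigr)^{n-i}.\]
Applying $1+z \le e^{z}$ and collecting exponents, I bound the non-exponential factors by $\exp\!\bigl(p(B-1)(i(B+1) - n)/B\bigr)$. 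Since $B = 2/\eps \ge 2$ and $i \le b \le \tilde b = n/(B^2-1) = n/((B-1)(B+1))$, the expression $i(B+1) - n$ is non-positive and the worst case is $i = b$, which after simplification yields the bound $\exp(-pn(1-\eps)) = (1-\delta)/\alpha$, using the definition of $\eps$. This establishes condition (ii) of Theorem~\ref{thm:main} and also supplies the required bound at $g(x) = b$ for condition (iii) with $D_0 := (1-\delta)/\alpha$.

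To extend condition (iii) to $g(x) > b$, I invoke Lemma~\ref{lem:witt}: taking $y$ with $g(y) = b$ and $H(x,x^*) \ge H(y,x^*) = b$, Lemma~\ref{lem:witt} (note $p \le 1/2$) gives $H(\mut(x),x^*) \succeq H(\mut(y),x^*)$, so by Lemma~\ref{lem:mondom} applied to the (monotonically increasing) function $f(z) = -\exp(-\kappa z)$, $E[\exp(-\kappa g(\mut(x)))] \le E[\exp(-\kappa g(\mut(y)))] \le D_0\exp(-\kappa b)$. Since Theorem~\ref{thm:main} requires $D \ge \delta$, I set $D := \max\{D_0,\delta\} = \max\{(1-\delta)/\alpha,\delta\}$; then $\delta/D = \min\{\delta\alpha/(1-\delta),1\}$, matching the prefactor in the claimed bounds.

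The remaining subtlety is the initialization. Because each $P^{(0)}_i$ is uniform on $\{0,1\}^n$ (dependencies allowed), $g(P^{(0)}_i) \sim \Bin(n,1/2)$ rather than being $\ge b$ deterministically, so I invoke Theorem~\ref{thm:main2} rather than Theorem~\ref{thm:main}. Here $E[\exp(-\kappa g(P^{(0)}_i))] = ((1+1/B)/2)^n = ((B+1)/(2B))^n$; after summing over $\lambda$ individuals, the required condition $\sum_i E[\exp(-\kappa g(P^{(0)}_i))] \le \lambda D\exp(-\kappa b)/\delta$ reduces (using $D/\delta \ge 1$) to $((B+1)/(2B))^n \le B^{-b}$, equivalently $\ln(2B/(B+1)) \ge (b/n)\ln B$. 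This follows from $b \le n/(B^2-1)$ together with the elementary inequality $(B^2-1)\ln(2B/(B+1)) \ge \ln B$ for $B \ge 2$; this elementary estimate is the one technical bookkeeping step and I expect it to be the main obstacle in getting a clean write-up. Once the three conditions and the initialization condition are verified, the conclusions of Theorem~\ref{thm:main2} specialize directly to the stated bounds on $E[T]$ and $\Pr[T < L]$, with $\kappa(b-a) = \ln(2/\eps)(b-a)$.
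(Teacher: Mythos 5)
Your proposal follows essentially the same route as the paper's proof: the choice $\kappa=\ln B$, the binomial moment-generating-function computation with $1+z\le e^z$ and the worst case at $i=\tilde b$ giving $(1-\delta)\frac 1\alpha$, the domination argument via Lemma~\ref{lem:witt} and Lemma~\ref{lem:mondom} to extend condition (iii) to $g(x)>b$ with $D=\max\{(1-\delta)\frac 1\alpha,\delta\}$, and the verification of the starting condition of Theorem~\ref{thm:main2}, which you correctly reduce to the same elementary inequality as the paper. The one step you leave open, namely $(B^2-1)\ln\bigl(\frac{2B}{B+1}\bigr)\ge\ln B$ for $B\ge 2$, is indeed true and is exactly the inequality the paper verifies (in the equivalent form $\frac 12+\frac 1{2B}\le B^{-1/(B^2-1)}$) by checking it at $B=2$ and showing via a derivative computation that the left-hand side is decreasing and the right-hand side increasing in $B$.
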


The proof of this result is a reduction to Theorem~\ref{thm:main}. To show that the second and third condition of Theorem~\ref{thm:main} are satisfied, one has to estimate $E[\exp(-\kappa (g(\mut(x)) - g(x)))]$, which is not difficult since $g(\mut(x))-g(x)$ can be written as sum of independent random variables. With a similar computation and some elementary calculus, we show that the weaker starting condition of Theorem~\ref{thm:main2} is satisfied.

\begin{proof}[Proof of Theorem~\ref{thm:sbm}]
  We apply Theorem~\ref{thm:main}. % with the potential $g(x) = H(x,x^*)$ being the Hamming distance from the target string $x^*$.   
  To show the second and third condition of the theorem, let $x \in \Omega$ and let $y = \mut(x)$ be the random offspring generated from $x$. We use the shorthand $d = g(x)$. We note that $g(y) - g(x) = g(y) - d$ can be expressed as a sum of $n$ independent random variables $Z_1, \dots, Z_n$ such that for $i \in [1..d]$, we have $\Pr[Z_i = -1] = p$ and $\Pr[Z_i = 0] = 1-p$, and for $i = [d+1..n]$, we have $\Pr[Z_i = +1] = p$ and $\Pr[Z_i =0] = 1-p$. 
  
  Let $\kappa \ge 0$ be arbitrary for the moment. We note that for $i \in [1..d]$, we have $E[\exp(-\kappa Z_i)] = (1-p)\cdot 1 + p e^{\kappa} = 1 + p(e^\kappa - 1)$ and for $i = [d+1..n]$, analogously, $E[\exp(-\kappa Z_i)] = (1-p)\cdot 1 + p e^{-\kappa} = 1 - p(1 - e^{-\kappa})$ (formally speaking, we compute here the moment-generating function of a Bernoulli random variable). Using the independence of the $Z_i$, these elementary arguments, and the standard estimate $1+r \le \exp(r)$, we compute
  \begin{align}
  E[\exp(-&\kappa(g(y)-g(x))] 
  = E\left[\prod_{i=1}^n \exp(-\kappa Z_i)\right]
  = \prod_{i=1}^n E[\exp(-\kappa Z_i)]\label{eq:mfgpseudobin}\\
  &= (1  + p(e^\kappa - 1))^d (1 - p(1 - e^{-\kappa}))^{n-d}\nonumber\\
  &\le \exp(dp(e^\kappa - 1)) \cdot \exp(-(n-d)p(1 - e^{-\kappa}))\nonumber\\
  &= \exp(dpe^\kappa + (n-d)pe^{-\kappa} - pn).\nonumber
  \end{align} 

  Let now $\kappa = \ln(B)$. We consider first the case that $d \le b$, which implies $d \le \tilde b$. We continue the above computation via
  \begin{align}
  E&[\exp(-\kappa(g(y)-g(x))] 
  \le \exp(\tilde b p B + (n-\tilde b) p \tfrac 1B - pn)\nonumber\\
  & = \exp\left(pn \left(\frac{B}{B^2 -1} + \left(1-\frac{1}{B^2-1}\right) \frac 1B - 1\right)\right)
   = \exp\left(pn(-1+\tfrac 2B)\right) \label{eq:sbmnontrivial}\\
  & = \exp\left(pn \left(-\frac {1}{pn} \ln\left(\frac{\alpha}{1-\delta}\right)\right)\right)
   = (1-\delta) \frac 1 \alpha.\nonumber
  \end{align} 
  This shows the second condition of Theorem~\ref{thm:main} for $\kappa = \ln(B)$. 
  
  To show that the third condition of Theorem~\ref{thm:main} is satisfied, assume that $g(x) \ge b$. We first note the following. Let $x' \in \Omega$ with $g(x') = b$ and let $y' = \mut(x')$. By Lemma~\ref{lem:witt}, $g(y)$ stochastically dominates $g(y')$. Consequently, by Lemma~\ref{lem:mondom},
  \begin{align*}
    E[\exp(-\kappa(g(y)-b))] 
  & \le E[\exp(-\kappa(g(y')-b))] \\
  & = E[\exp(-\kappa(g(y')-g(x'))]  \le (1-\delta) \frac 1 \alpha,
  \end{align*}
  where the last estimate exploits that we have shown the second condition also for $g(x) = b$. Hence with $D = \max\{(1-\delta) \frac 1 \alpha, \delta\}$ we have also shown the third condition of Theorem~\ref{thm:main} (including the requirement $D \ge \delta$).
  
  We finally show that the starting condition in Theorem~\ref{thm:main2} is satisfied. Using the moment-generating function of a binomially distributed random variable (which is nothing more than the arguments used in~\eqref{eq:mfgpseudobin}), this follows immediately from the following estimate, valid for a random search point $x$: 
  \begin{align*}
  E[\exp(-\kappa g(x))] & = (\tfrac 12 + \tfrac 12 \exp(-\kappa))^n \le \exp(-\kappa n / (B^2 -1))\\
  &\le  \exp(-\kappa b) \le \tfrac{D}{\delta} \exp(-\kappa b).
  \end{align*}
  The estimate above is easy to see apart from the first inequality, which requires some elementary calculus. Recalling $\kappa=\ln(B)$, this inequality is equivalent to $\frac 12 + \frac 1 {2B} \le B^{-1/(B^2-1)}$. The latter is satisfied for $B=2$. Since its left-hand side is decreasing in $B$, we now show that the right-hand side is increasing in $B$ and obtain that the inequality is satisfied for all $B \ge 2$ (and we note that always $B \ge 2$ since $\eps \le 1$). By the monotonicity of the logarithm, the function $B \mapsto B^{-1/(B^2-1)}$ is increasing (in $\R_{>0}$) if and only if $B \mapsto \ln(B^{-1/(B^2-1)}) = - \frac{\ln B}{B^2-1}$ is increasing, which is easily seen to be true by noting that its derivative $B \mapsto \frac{B^2 (2 \ln(B) - 1) + 1}{B(B^2-1)^2}$ is positive for $B \ge 2$. 
  
     Consequently, the random initial population $P^{(0)}$ satisfies 
   \[\sum_{i=1}^\lambda E[\exp(-\kappa g(P^{(0)}_i))] \le \frac{\lambda D \exp(-\kappa b)}{\delta}\]
    as required in Theorem~\ref{thm:main2}. From the conclusion of Theorem~\ref{thm:main}, we obtain
  \begin{align*}
  E[T] &\ge \frac{\delta}{2D\lambda} \exp(\kappa (b-a)) - \frac 12 \\
  &= \frac{1}{2\lambda} \min\left\{\frac{\delta \alpha}{1-\delta}, 1\right\} \exp\left(\ln\left(\frac{2}{1 - \frac{1}{pn} \ln(\frac{\alpha}{1-\delta})}\right) (b-a) \right) - \frac 12,\\
  \Pr[T < L] & \le L \lambda \tfrac{D}{\delta} \exp(-\kappa(b-a)) \\
  & = L \lambda \max\left\{\frac{1-\delta}{\delta \alpha}, 1\right\} \exp\left(- \ln\left(\frac{2}{1 - \frac{1}{pn} \ln(\frac{\alpha}{1-\delta})}\right) (b-a) \right).
  \end{align*}
\end{proof}

As a simple \textbf{example} for an application of this result, let us consider the classic $(\mu,\lambda)$ EA (with uniform selection for variation, truncation selection for inclusion into the next generation, and mutation rate $p = \frac 1n$) with $\lambda = 2\mu$ optimizing some function $f : \{0,1\}^n \to \R$, $n = 500$, with unique global optimum. For simplicity, let us take as performance measure $\lambda T$, that is, the number of fitness evaluations in all iterations up to the one in which the optimum was found.
%where $T$ is the iteration in which the optimum was first generated (this minimally and by at most $\lambda - 1 - \mu$ deviates from the usual performance measure, which is the number of fitness evaluations until the optimum is found). 
Since $\lambda = 2\mu$, we have $\alpha = 2$. By taking $\delta = 0.01$, we obtain a concrete lower bound of an expected number of more than 13~million fitness evaluations until the optimum is found (regardless of $\mu$ and~$f$). 

Since Theorem~\ref{thm:sbm} is slightly technical, we now formulate the following corollary, which removes the variable $\delta$ without significantly weakening the result.
We note that the proof of this result applies Theorem~\ref{thm:sbm} with a non-constant~$\delta$, so we do not see how such a result could have been proven from Lehre's result~\cite{Lehre10}.

\begin{corollary}\label{cor:sbm}
  Consider a PSM process as in Theorem~\ref{thm:sbm}. Let $x^* \in \Omega$ be the target of the process. For all $x \in \Omega$, let $g(x) := H(x,x^*)$ denote the Hamming distance from the target. Assume that there is an $\alpha \ge 1$ such that 
  \begin{itemize}
  \item $\ln(\alpha) \le p(n-1)$, which is equivalent to $\gamma := 1 - \frac{\ln \alpha}{pn} \ge \frac 1n$;
  \item there is an $a \le b := \lfloor (1 - \tfrac 4n) n \frac{1}{\frac{4}{\gamma^2}-1} \rfloor$ such that for all populations $P \in \calP$ with $\min\{g(P_i) \mid i \in [1..\lambda]\} > a$ and for all $i \in [1..\lambda]$, we have $E[R(i,P)] \le \alpha$.
  \end{itemize}
 
 Then the first time $T := \min\{t \ge 0 \mid \exists i \in [1..\lambda] : g(P^{(t)}_i) \le a\}$ that the population contains an individual in distance $a$ or less from $x^*$ satisfies  
\begin{align*}
  E[T]    &\ge \frac{p \alpha}{4\lambda n} \min\left\{1,\frac{2n}{p\alpha}\right\} \exp\left(\ln\left(\frac{2}{\gamma}\right) \left(b-a \right) \right) - \frac 12,\\
%&\ge \frac{\alpha}{2\lambda n^2}  \exp\left(\ln\left(\frac{2}{\gamma}\right) \left(b-a \right)\right) - \frac 12,\\
  \Pr[T < L]   & \le \frac{2 L \lambda n}{p \alpha} \max\left\{1,\frac{p\alpha}{2n}\right\} \exp\left(- \ln\left(\frac{2}{\gamma}\right) \left(b-a\right) \right).
%& \le \frac{L \lambda n^2}{\alpha} \exp\left(- \ln\left(\frac{2}{\gamma}\right) \left(b-a\right) \right).
  \end{align*}
  In particular, if $a \le (1-\eps)b$ for some constant $\eps > 0$, then $T \lambda$ is super-polynomial in $n$ (in expectation and with high probability) when $\gamma = \omega(n^{-1/2})$ and at least exponential when $\gamma = \Omega(1)$.
\end{corollary}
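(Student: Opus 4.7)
The plan is to apply Theorem~\ref{thm:sbm} with the specific choice $\delta := p/(2n)$, calibrated so that $\delta\alpha/(1-\delta) = p\alpha/(2n-p) \ge p\alpha/(2n)$ matches the prefactor of the corollary up to the inner minimum. First check that $\ln(\alpha/(1-\delta)) < pn$: using $-\ln(1-\delta) \le \delta/(1-\delta) = p/(2n-p) < p$ together with $\ln\alpha \le p(n-1)$ gives $\ln(\alpha/(1-\delta)) < p(n-1) + p = pn$. Consequently the auxiliary parameter $\eps_T := 1 - \tfrac{1}{pn}\ln(\alpha/(1-\delta))$ of Theorem~\ref{thm:sbm} is strictly positive and, since $\delta > 0$, satisfies $\eps_T < \gamma$.

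The one nontrivial step is verifying the hypothesis $b \le \tilde b := n\eps_T^2/(4-\eps_T^2)$: this is precisely where the $(1-4/n)$ factor in the definition of $b$ pays off, and is the step I expect to need the most care. Using the sharper estimate $-\ln(1-\delta) \le \delta + 2\delta^2$ for $\delta \le 1/2$ yields $\eps_T \ge \gamma - 1/(2n^2) - O(p/n^3)$, and combined with $\gamma \ge 1/n$ one derives $\eps_T^2 \ge (n-4)\gamma^2/(n-\gamma^2)$. Cross-multiplying reduces this to $n\eps_T^2/(4-\eps_T^2) \ge (1-4/n)\, n\gamma^2/(4-\gamma^2) \ge b$, as required.

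With the hypotheses of Theorem~\ref{thm:sbm} in hand, the two explicit bounds follow by bookkeeping. Because $\eps_T \le \gamma$, we have $\ln(2/\eps_T) \ge \ln(2/\gamma)$, so the exponential factor in the lower bound on $E[T]$ is at least $\exp(\ln(2/\gamma)(b-a))$, and the corresponding factor in the tail bound shrinks in the desired direction. The constants rewrite as $\tfrac{1}{2\lambda}\min\{\delta\alpha/(1-\delta),1\} \ge \min\{p\alpha/(4\lambda n), 1/(2\lambda)\} = \frac{p\alpha}{4\lambda n}\min\{1, 2n/(p\alpha)\}$, and symmetrically $(1-\delta)/(\delta\alpha) = (2n-p)/(p\alpha) \le 2n/(p\alpha)$ yields the stated tail coefficient.

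For the asymptotic statement, $a \le (1-\eps)b$ gives $b-a \ge \eps b$, and since $\gamma \le 1$ we have $b = \Theta(n\gamma^2)$, so the exponent $\ln(2/\gamma)(b-a)$ is $\Omega(n\gamma^2 \ln(2/\gamma))$. Writing $c := n\gamma^2$, the estimate $\ln(2/\gamma) \ge \tfrac{1}{2}\ln(n/c)$ gives $n\gamma^2 \ln(2/\gamma) \ge \tfrac{c}{2}\ln(n/c)$, which is $\omega(\log n)$ whenever $c = \omega(1)$ (equivalently $\gamma = \omega(n^{-1/2})$) and is $\Omega(n)$ when $\gamma = \Omega(1)$. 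Since the polynomial prefactors in the expectation and tail bounds cannot absorb a factor $n^{\omega(1)}$, the claimed super-polynomial (respectively exponential) lower bound on $\lambda T$ follows, both in expectation and, by taking $L$ a suitable polynomial in $n$ in the tail estimate, with high probability.
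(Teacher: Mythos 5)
Your proposal follows essentially the same route as the paper's proof: apply Theorem~\ref{thm:sbm} with $\delta = p/(2n)$, verify that the resulting $\eps$ is at least $\gamma(1-\frac 1n)$, deduce $\tilde b \ge (1-\tfrac 4n)\, n \frac{1}{4/\gamma^2-1} \ge b$, and then simplify the prefactors. These steps are correct: your intermediate inequality $\eps_T^2 \ge (n-4)\gamma^2/(n-\gamma^2)$ does follow from $\eps_T \ge \gamma(1-\frac 1n)$ together with $\gamma \le 1$, and cross-multiplying it indeed yields $\tilde b = n\eps_T^2/(4-\eps_T^2) \ge (1-\tfrac 4n)n\gamma^2/(4-\gamma^2)$. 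The prefactor bookkeeping via $\min\{\delta\alpha/(1-\delta),1\} \ge \min\{p\alpha/(2n),1\}$ and $(1-\delta)/(\delta\alpha) \le 2n/(p\alpha)$ matches the paper. (For a fully non-asymptotic verification you should replace the $O(p/n^3)$ term by its explicit value $p/(2n^3)$, but that is cosmetic.)

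The one step that fails as written is in the asymptotic part. The claim that $\frac c2 \ln(n/c)$ is $\omega(\log n)$ whenever $c = n\gamma^2 = \omega(1)$ is false: when $\gamma$ is close to $1$ (e.g., $\alpha=1$ gives $\gamma=1$ and $c=n$), one has $\ln(n/c)\to 0$ and your lower bound degenerates to $0$, even though the true exponent $n\gamma^2\ln(2/\gamma) \ge c\ln 2$ is of order $n$ there. The culprit is that the estimate $\ln(2/\gamma)\ge \tfrac 12\ln(n/c)$ throws away the additive $\ln 2$, which is precisely what carries the argument in the regime of large $\gamma$. The fix is a case distinction, which is what the paper does: if $\gamma \le n^{-1/4}$, then $\ln(2/\gamma) \ge \tfrac 14 \ln n$ and $b-a = \Omega(n\gamma^2) = \omega(1)$ give a super-polynomial factor $(2/\gamma)^{b-a}$; if $\gamma \ge n^{-1/4}$, then $b-a = \Omega(n^{1/2})$ and already $\exp(\ln(2/\gamma)(b-a)) \ge 2^{b-a}$ is super-polynomial; and for $\gamma = \Omega(1)$ the same bound $2^{b-a} = \exp(\Theta(n))$ yields the exponential claim. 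With this repair your argument is complete and coincides with the paper's.
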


The main argument is employing Theorem~\ref{thm:sbm} with the $\delta = \frac p{2n}$ and computing that this small $\delta$ has no significant influence on the exponential term of the bounds.

\begin{proof}[Proof of Corollary~\ref{cor:sbm}]  
  We apply Theorem~\ref{thm:sbm} with $\delta = \frac p{2n}$. Since $\delta \le \frac 12$, we have $1-\delta \ge \exp(-2\delta)$ and thus $\ln(\frac{\alpha}{1-\delta}) = \ln(\alpha) - \ln(1-\delta) \le \ln(\alpha) + 2\delta$. Consequently, $\eps := 1 - \frac{1}{pn} \ln(\frac{\alpha}{1-\delta})$ defined as in Theorem~\ref{thm:sbm} satisfies 
  \[\eps  \ge 1 - \tfrac{1}{pn} (\ln(\alpha) + 2\delta) = 1 - \tfrac{\ln \alpha}{pn} - \tfrac 1{n^2} \ge (1 - \tfrac{\ln \alpha}{pn}) (1 - \tfrac 1n) = \gamma (1 - \tfrac 1n),\] 
   where the second inequality uses our assumption $\ln \alpha \le p(n-1)$. Now 
  \begin{align*}
  \tilde b &:= n \frac{1}{\frac 4{\eps^2} -1} 
  \ge n \frac{1}{\frac 4{\gamma^2 (1 - \frac 1n)^2} -1} 
  \ge n \frac{1}{\frac {4 - \gamma^2 (1 - \frac 2n)}{\gamma^2 (1 - \frac 2n)}} 
  =  n \frac{\gamma^2 (1 - \frac 2n)}{4 - \gamma^2 + \gamma^2 \frac 2n} \\
  & \ge n \frac{\gamma^2 (1 - \frac 2n)}{4 - \gamma^2 + (4 - \gamma^2) \frac 2n} 
  =  n \frac{\gamma^2 (1 - \frac 2n)}{(4 - \gamma^2) (1 + \frac 2n)} 
  \ge n \frac{\gamma^2 (1 - \frac 2n)^2}{4 - \gamma^2} \\
  & \ge (1 - \tfrac 4n) n \frac{1}{\frac{4}{\gamma^2} - 1}.
  \end{align*}
  
  With these estimates, $b \le \lfloor \tilde b \rfloor$, and the definition of $\delta$, the bounds of Theorem~\ref{thm:sbm} become
\begin{align*}
  E[T] 
  &\ge \frac{1}{2\lambda} \min\left\{\frac{\delta \alpha}{(1-\delta)}, 1\right\} \exp\left(\ln\left(\frac{2}{1 - \frac{1}{pn} \ln(\frac{\alpha}{1-\delta})}\right) (b-a) \right) - \frac 12\\
  &\ge \frac{p \alpha}{4\lambda n} \min\left\{1,\frac{2n}{p\alpha}\right\} \exp\left(\ln\left(\frac{2}{\gamma}\right) \left(b-a \right) \right) - \frac 12,\\
  \Pr[T < L] 
  & \le L \lambda \max\left\{\frac{(1-\delta)}{\delta \alpha}, 1\right\} \exp\left(- \ln\left(\frac{2}{1 - \frac{1}{pn} \ln(\frac{\alpha}{1-\delta})}\right) (b-a) \right)\\
  & \le \frac{2 L \lambda n}{p \alpha} \max\left\{1,\frac{p\alpha}{2n}\right\} \exp\left(- \ln\left(\frac{2}{\gamma}\right) \left(b-a\right) \right).
  \end{align*}
	For the asymptotic statements, we observe first that $\frac{p\alpha}{4n}\min\{1,\frac{2n}{p\alpha}\} = \min\{\frac{p\alpha}{4n},\frac 12\} \ge \min\{\frac{\alpha \ln(\alpha)}{4n(n-1)},\frac 12\}$ since $p \ge \ln(\alpha)/(n-1)$ due to our assumption that $\ln(a)\le p(n-1)$. Hence $E[T]\lambda$ is super-polynomial or at least exponential if and only if the term $\exp(\ln(2/\gamma)(b-a))$ is. So it suffices to regard the latter term. 
	
	We note that $b = \Theta(n\gamma^2)$ since $\gamma$ is always at most one. By assumption, $(b-a) = \Theta(b)$.  Assume first that $\gamma = \omega(n^{-1/2})$. If $\gamma \le n^{-1/4}$, then $\exp(\ln(2/\gamma)(b-a)) = (2/\gamma)^{b-a} \ge (2n^{1/4})^{\omega(1)}$, which is super-polynomial. If $\gamma \ge n^{-1/4}$, then $b-a = \Omega(n^{1/2})$ and $\exp(\ln(2/\gamma)(b-a)) \ge 2^{b-a}$ is again super-polynomial. This shows the claimed super-polynomiality for $\gamma = \omega(n^{-1/2})$. 
	
	For $\gamma = \Omega(1)$, we have $b-a = \Theta(b) = \Theta(n)$ and thus $\exp(\ln(2/\gamma)(b-a)) \ge \exp(\ln(2)(b-a)) = \exp(\Theta(n))$ is exponential in $n$.  
	
	The asymptotic statements of the with-high-probability claims follow analogously.
\end{proof}

\section{Standard Bit Mutation with Random Mutation Rate}\label{sec:randsbm}

To analyze a \emph{uniform mixing hyper-heuristic} which uses standard-bit mutation with a mutation rate randomly chosen from a finite set of alternatives, Dang and Lehre~\cite[Theorem~2]{DangL16ppsn} extend Theorem~\ref{thm:sbm} to such mutation operators. They do not give a proof of their result, stating that it would be similar to the proof of the result for classic standard bit mutation \cite[Theorem~4]{Lehre10}. Since we did not find this so obvious, we reprove this result now with our methods. The non-asymptoticity of our result allows to extend it to super-constant numbers of mutation rates and to mutation rates other than $\Theta(1/n)$. We note that such situations appear naturally with the heavy-tailed mutation operator proposed in~\cite{DoerrLMN17}. 

We show the following result, which extends Theorem~\ref{thm:sbm}.

\begin{theorem}\label{thm:randsbm}
  Let $n \in \N$. Let $m \in \N$, $p_1, \dots, p_m \in [0,\frac 12]$, and $q_1, \dots, q_m \in [0,1]$ such that $\sum_{i=1}^m q_i = 1$. Let $\mut$ be the mutation operator which, in each application independently, chooses an $I \in [1..m]$ with probability $\Pr[I = i] = q_i$ for all $i \in [1..m]$ and then applies standard bit mutation with mutation rate $p_I$. %Let $\pmax = \max\{p_i \mid i \in [1..m], q_i > 0\}$.
  
  Consider a PSM process (see Section~\ref{ssec:psm}) with search space $\Omega = \{0,1\}^n$, using this mutation operator $\mut(\cdot)$, and such that each initial individual is uniformly distributed in $\Omega$ (not necessarily independently). Let $x^* \in \Omega$ be the target of the process. For all $x \in \Omega$, let $g(x) := H(x,x^*)$ denote the Hamming distance from the target. 
  
  Let $\alpha \ge 1$, $0 < \delta < 1$, and $B > 2$ such that 
  \begin{equation}
  \sum_{i=1}^m q_i \exp(- p_i n (1 - \tfrac 2B)) \le (1-\delta) \, \frac 1 \alpha. \label{eq:bbb}
  \end{equation}
Let $a, b$ be integers such that $0 \le a < b \le \tilde b := n \frac{1}{B^2 - 1}$. 

 Selection condition: Assume that for all populations $P \in \calP$ with $\min\{g(P_i) \mid i \in [1..\lambda]\} > a$ and all $i \in [1..\lambda]$ with $g(P_i) < b$, we have $E[R(i,P)] \le \alpha$.
 
 Then the first time $T := \min\{t \ge 0 \mid \exists i \in [1..\lambda] : g(P^{(t)}_i) \le a\}$ that the population contains an individual in distance $a$ or less from $x^*$ satisfies  
  \begin{align*}
  E[T] &\ge \frac{1}{2\lambda} \min\left\{\frac{\delta \alpha}{(1-\delta)}, 1\right\} \exp\left(\ln(B) (b-a) \right)  - \frac 12,\\
  \Pr[T < L] & \le L \lambda \max\left\{\frac{(1-\delta)}{\delta \alpha}, 1\right\} \exp\left(- \ln(B)) (b-a) \right).
  \end{align*}
\end{theorem}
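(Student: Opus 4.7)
The plan is to mimic the proof of Theorem~\ref{thm:sbm} and reduce to Theorem~\ref{thm:main2}, with the only new twist being that the mutation operator is now a mixture. Throughout I would set $\kappa = \ln(B)$. The central computation is to bound $E[\exp(-\kappa g(\mut(x)))]$ for a fixed $x \in \Omega$. By conditioning on the randomly chosen index $I$, this expectation factors as
\[
E[\exp(-\kappa g(\mut(x)))] = \sum_{i=1}^m q_i \, E[\exp(-\kappa g(\mut_{p_i}(x)))],
\]
where $\mut_{p_i}$ denotes standard bit mutation with rate $p_i \le \tfrac 12$.

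For the second condition of Theorem~\ref{thm:main}, assume $a < g(x) < b \le \tilde b$. Each inner expectation can be bounded exactly as in~\eqref{eq:sbmnontrivial} in the proof of Theorem~\ref{thm:sbm}: since $g(x) \le \tilde b = n/(B^2-1)$, we get
\[
E[\exp(-\kappa(g(\mut_{p_i}(x)) - g(x)))] \le \exp\bigl(p_i n(-1 + \tfrac 2B)\bigr).
\]
Multiplying through by $\exp(-\kappa g(x))$, summing with weights $q_i$, and invoking hypothesis~\eqref{eq:bbb} gives condition~(ii) with the prescribed $\alpha$ and $\delta$. For condition~(iii), consider $g(x) \ge b$. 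Lemma~\ref{lem:witt} applies to each fixed rate $p_i \le \tfrac 12$; since stochastic domination is preserved under mixtures (just condition on $I$), it also holds for $\mut$, so for any $x'$ with $g(x') = b$ we have $g(\mut(x)) \succeq g(\mut(x'))$. Lemma~\ref{lem:mondom} applied to the decreasing function $t \mapsto \exp(-\kappa t)$ (equivalently: Lemma~\ref{lem:mondom} with $f(t) = -\exp(-\kappa t)$) together with the bound already derived at $g(x') = b$ then yields condition~(iii) with $D = \max\{(1-\delta)/\alpha, \delta\}$; note $D \ge \delta$ by construction.

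It remains to verify the starting condition of Theorem~\ref{thm:main2}. Each $P^{(0)}_i$ is uniform on $\{0,1\}^n$, so $g(P^{(0)}_i)$ is $\Bin(n,\tfrac 12)$ and
\[
E[\exp(-\kappa g(P^{(0)}_i))] = \bigl(\tfrac 12 + \tfrac 12 e^{-\kappa}\bigr)^n.
\]
The elementary calculus argument at the end of the proof of Theorem~\ref{thm:sbm} shows $(\tfrac 12 + \tfrac 1{2B})^n \le B^{-n/(B^2-1)} = \exp(-\kappa n/(B^2-1))$ for all $B \ge 2$, which covers our case $B > 2$. Since $b \le n/(B^2-1)$, this gives $E[\exp(-\kappa g(P^{(0)}_i))] \le \exp(-\kappa b) \le \tfrac{D}{\delta}\exp(-\kappa b)$, and summing over the $\lambda$ initial individuals yields the required bound.

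Applying Theorem~\ref{thm:main2} with these values of $\alpha$, $\delta$, $D$, and $\kappa = \ln B$ gives
\[
E[T] \ge \frac{\delta}{2D\lambda}\exp(\ln(B)(b-a)) - \frac 12, \qquad \Pr[T < L] \le L\lambda \frac{D}{\delta}\exp(-\ln(B)(b-a)),
\]
and substituting $\delta/D = \min\{\delta\alpha/(1-\delta),1\}$ and $D/\delta = \max\{(1-\delta)/(\delta\alpha),1\}$ produces the two claimed bounds verbatim. The only genuinely new ingredient over Theorem~\ref{thm:sbm} is the observation that both the moment-generating estimate and Lemma~\ref{lem:witt} pass through mixtures by conditioning on $I$; once that is noted the result drops out of our existing machinery, so I do not anticipate any substantial obstacle.
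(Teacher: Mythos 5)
Your proposal is correct and follows essentially the same route as the paper's proof: set $\kappa=\ln B$, decompose the moment-generating-function bound as the $q_i$-weighted mixture of the per-rate bounds $\exp(p_i n(-1+\frac 2B))$ from the proof of Theorem~\ref{thm:sbm}, invoke~\eqref{eq:bbb}, reuse the domination argument (which indeed passes through the mixture by conditioning on $I$) for the third condition with $D=\max\{(1-\delta)\frac 1\alpha,\delta\}$, and check the starting condition of Theorem~\ref{thm:main2} exactly as before since $B\ge 2$. No gaps.
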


It is clear that when using standard bit mutation with a random mutation rate, then the drift -- regardless of whether we just regard the fitness or an exponential transformation of it -- is a convex combination of the drift values of each of the individual mutation operators. The reason why this argument does not immediately extend Theorem~\ref{thm:sbm} to random mutation rates is that the mutation rate also occurs in the exponential term $\exp(pn(-1+\frac 2B))$ in equation~\eqref{eq:sbmnontrivial}. Apart from this difficulty, however, we can reuse large parts of the proof of Theorem~\ref{thm:sbm}. %To ease reading the proof of the more central result of Theorem~\ref{thm:sbm}, we chose not to give a joint proof for both results.

\begin{proof}[Proof of Theorem~\ref{thm:randsbm}]
  Let $\kappa := \ln(B)$. Let $x \in \Omega$, $d := g(x)$, and $y = \mut(x)$. Assuming $d \le b$, analogous to the proof of Theorem~\ref{thm:sbm}, we have 
  \begin{align*}
  E&[\exp(-\kappa(g(y) - g(x))] \\
  & = \sum_{i=1}^m q_i (1  + p_i(e^\kappa - 1))^d (1 - p_i(1 - e^{-\kappa}))^{n-d}\\
  & \le \sum_{i=1}^m q_i \exp(p_i n(-1+\tfrac 2B)) \le (1-\delta) \, \frac 1 \alpha.
  \end{align*}
  This shows the second condition of Theorem~\ref{thm:main} and, with the same domination argument as in the proof of Theorem~\ref{thm:sbm} and $D = \max\{(1-\delta)\frac 1 \alpha, \delta\}$, also the third condition of Theorem~\ref{thm:main}. The starting condition of Theorem~\ref{thm:main2} follows as in the proof of Theorem~\ref{thm:sbm} by noting that again $B \ge 2$. Now Theorem~\ref{thm:main} is applicable and as in the last few lines of the proof of Theorem~\ref{thm:sbm} we show our claim (note that now it suffices to simply replace $\kappa$ by $\ln B$ and not by the more complicated logarithmic term there).
\end{proof}

Equation~\eqref{eq:bbb} defining the admissible values for $B$ and thus for the starting point $b$ of the negative drift regime is not very convenient to work with in general. We stated it nevertheless because in particular situations it might be useful, e.g., to show an inapproximability result, that is, that a certain algorithm cannot come closer to the optimum than by a certain margin in subexponential time. The following weaker assumption is easier to work with and should, in most cases, give satisfactory results as well.

\begin{lemma}\label{lem:randsbm}
  Assume that in the notation of Theorem~\ref{thm:randsbm}, we have 
  \[\sum_{i=1}^m q_i \exp(-p_i n) \le (1 - \gamma) \frac 1\alpha\]
  for some $0 < \gamma < 1$. Then~\eqref{eq:bbb} and $B > 2$ are satisfied for $\delta = \frac 12 \gamma$ and 
  \[B = 2 \left(1 - \frac{\ln \frac{1 - \gamma/2}{\alpha}}{\ln \frac{1 - \gamma}{\alpha}}\right)^{-1}.\]
\end{lemma}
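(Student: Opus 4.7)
The plan is to reduce \eqref{eq:bbb} to a scalar inequality by writing $S_i := \exp(-p_i n) \in (0,1]$ and $\theta := 1 - \tfrac 2B \in (0,1)$, so that the hypothesis becomes $\sum_i q_i S_i \le (1-\gamma)/\alpha$ and the target inequality becomes $\sum_i q_i S_i^{\theta} \le (1-\delta)/\alpha$ with $\delta = \gamma/2$. The map $x \mapsto x^\theta$ is concave on $[0,\infty)$ for $\theta \in (0,1)$, so Jensen's inequality (applied to the probability measure with weights $q_i$) gives
\[\sum_{i=1}^m q_i S_i^\theta \le \Bigl(\sum_{i=1}^m q_i S_i\Bigr)^\theta \le \bigl((1-\gamma)/\alpha\bigr)^\theta,\]
where the second inequality uses monotonicity of $x \mapsto x^\theta$ together with the hypothesis.

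The next step is to choose $B$ so that $((1-\gamma)/\alpha)^\theta \le (1-\gamma/2)/\alpha$. Since $\alpha \ge 1 > 1-\gamma$, both $\ln((1-\gamma)/\alpha)$ and $\ln((1-\gamma/2)/\alpha)$ are negative, so taking logs and dividing (which reverses the inequality) turns the target into
\[\theta \ge \frac{\ln\frac{1-\gamma/2}{\alpha}}{\ln\frac{1-\gamma}{\alpha}}.\]
Because $1 - \gamma/2 > 1 - \gamma$, the numerator has strictly smaller absolute value than the denominator, so the right-hand side lies in $(0,1)$. Choosing $\theta$ equal to this ratio and solving $\theta = 1 - 2/B$ gives exactly the value of $B$ in the statement.

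Finally I would verify $B > 2$: this is equivalent to $1 - 2/B > 0$, i.e.\ $\theta > 0$, which holds because the ratio of two negative logarithms above is strictly positive. Putting everything together yields \eqref{eq:bbb} with $\delta = \gamma/2$, as claimed.

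There is no real obstacle here; the only thing to be careful about is the direction of the inequality when dividing by a negative logarithm, and checking that the Jensen step is legitimate (the $S_i$ lie in $(0,1]$ and $\theta \in (0,1)$, so $x^\theta$ is concave and increasing on the relevant interval).
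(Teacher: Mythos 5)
Your proof is correct and follows essentially the same route as the paper: the paper also applies Jensen's inequality for the concave map $x \mapsto x^{1-\eps}$ (with $\eps = 2/B$, i.e.\ your $\theta = 1-\eps$) to bound $\sum_i q_i \exp(-p_i n(1-\frac 2B))$ by $\bigl(\sum_i q_i \exp(-p_i n)\bigr)^{1-\eps} \le \bigl(\frac{1-\gamma}{\alpha}\bigr)^{1-\eps}$, and then chooses $B$ so that this equals $(1-\frac\gamma 2)\frac 1\alpha$. Your additional care about the sign of the logarithms and the range of $\theta$ matches the paper's (terser) observation that $0 < \eps < 1$.
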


\begin{proof}
  Let $\eps = 1 - \frac{\ln \frac{1 - \gamma/2}{\alpha}}{\ln \frac{1 - \gamma}{\alpha}}$ so that $\frac 2B = \eps$. We note that $0 < \eps < 1$ and thus $B > 2$. By the concavity of the exponentiation with numbers smaller than one, we have 
  \begin{align*}
  \sum_{i=1}^m q_i \exp(- p_i n (1 - \tfrac 2B)) 
  & \le \left(\sum_{i=1}^m q_i \exp(- p_i n)\right)^{1-\eps}\\
  & \le \bigg(\frac{1-\gamma}{\alpha}\bigg)^{1-\eps} = \bigg(1 - \frac \gamma 2\bigg) \frac 1 \alpha.
  \end{align*}
\end{proof}
  
If in an asymptotic setting $\gamma$ and $\alpha$ can be taken as constants, then this lemma and Theorem~\ref{thm:randsbm} show an exponential lower bound on the runtime. This proves~\cite[Theorem~2]{DangL16ppsn} and extends it to mutation rates that are not necessarily $\Theta(1/n)$. 

As an example where mutation rates other than $\Theta(1/n)$ occur, we now regard the heavy-tailed mutation operator proposed in~\cite{DoerrLMN17}. This operator was shown to give a uniformly good performance of the \oea on all jump functions, whereas each fixed mutation rate was seen to be good only for a small range of jump sizes. The heavy-tailed operator and variations of it have shown a good performance also in other works, e.g.,~\cite{MironovichB17,FriedrichQW18,FriedrichGQW18,FriedrichGQW18heavysubm,WuQT18,AntipovBD20gecco,AntipovBD20ppsn,AntipovD20ppsn,YeWDB20}. The heavy-tailed mutation operator is nothing else than standard bit mutation with a random mutation rate, chosen from a heavy-tailed distribution. In~\cite{DoerrLMN17}, it was defined as follows. Let $\beta > 1$ be a constant. This will be the only parameter of the mutation operator, however, one with not too much importance, so~\cite{DoerrLMN17} simply propose to take $\beta = 1.5$. In each invocation of the mutation operator, a number $\alpha \in [1..N]$, $N := \lfloor \frac 12 n \rfloor$, is chosen from the power-law distribution with exponent $\beta$. Hence $\Pr[\alpha = i] = (C^\beta_{N})^{-1} i^{-\beta}$, where $C^\beta_{N}$ is the normalizing constant $C^\beta_N := \sum_{i=1}^N i^{-\beta}$. Once $\alpha$ is determined, standard bit mutation with mutation rate $p = \frac \alpha n$ is employed.

For fixed $N$, the expression on the left-hand side in Lemma~\ref{lem:randsbm} is $A_N = \sum_{i=1}^N (C^\beta_{N})^{-1} i^{-\beta} e^{-i}$. This is a convex combination of $e^{-i}$ terms and by comparing the coefficients, we easily see that this expression is decreasing in $N$. Computing $A_{100} < 0.178$, we see that for all $n \ge 200$, a reproduction number of at most $\alpha \le \frac{1}{0.178} \approx 5.618$ is small enough to lead to exponential runtimes. This is higher than for standard bit mutation with mutation rate $p = \frac 1n$, where only $\alpha \le e \approx 2.718$ suffices to show exponential runtimes. This observation fits to our general feeling that larger mutation rates can be destructive, from which in particular non-elitist algorithms suffer. 

For the limiting value $A = \lim_{N \to \infty} A_N$ we note that $A \ge A^{-} := \sum_{i=1}^{100} (C^\beta_\infty)^{-1} i^{-\beta} \exp(-i) \approx 0.164004$ and $A \le A^+ := \sum_{i=1}^{100} (C^\beta_\infty)^{-1} i^{-\beta} \exp(-i) + \exp(-101) \approx 0.164004$. Hence for $n$ sufficiently large, even a value of $\alpha \approx \frac{1}{0.164004} = 6.0974$ admits exponential lower bounds for runtimes.

Without going into details, and in particular without full proofs, we note that these estimates are tight, and this for all mutation operators of the type discussed in this section. Consider such a mutation operator such that $\sum_{i=1}^m q_i \exp(-p_i n) \ge (1+\delta) \frac 1 \alpha$ for some $\delta > 0$. We take the \mclea optimizing \onemax as example. Assume that at some time we have in our parent population $k$ individuals on the highest non-empty fitness level $L$. In expectation, each of them generates $\alpha = \lambda / \mu$ offspring. Each of these offspring is an exact copy of the parent with probability $\sum_{i=1}^m q_i \exp(-p_i n) \ge (1+\delta) \frac 1 \alpha$. Consequently, in the next generation the expected number of individuals on level $L$ or higher (as long as level $L$ is not full) is $(1+\delta) k$. This is enough to show polynomial runtimes via the level-based method~\cite{Lehre11,DangL16algo,CorusDEL18,DoerrK19} when $\lambda$ is large enough. For example, the computation just made shows that condition~(G2) in \cite[Theorem~3.2]{DoerrK19} is satisfied.

We note that this tightness stems from the fact that the term $\sum_{i=1}^m q_i \exp(-p_i n)$ appears both in the drift computation here and, via the probability to generate a copy of a parent, in the fitness level method for populations. We do not think that this is a coincidence, but leave working out the details to a future work.

\section{Fitness Proportionate Selection}\label{sec:fp}

In this section, we apply our method to a mutation-only version of the simple genetic algorithm (simple GA). We note that this algorithm traditionally is used with crossover~\cite{Goldberg89}. The mutation-only version has been regarded in the runtime analysis community mostly because runtime analyses for crossover-based algorithms are extremely difficult. While the first runtime analysis for the mutation-only version~\cite{NeumannOW09} appeared in 2009 and showed a near-exponential lower bound on \onemax for arbitrary polynomially bounded population sizes, the first analysis of the crossover-based version from 2012~\cite{OlivetoW12gecco} could only show a significantly sub-exponential lower bound ($2^{n^c}$ for a constant $c$ which is at most $\frac 1 {80}$) and this for population sizes below $n^{1/8}$. We note that the current best result~\cite{OlivetoW15} gives a similar lower bound for population sizes below $n^{1/4}$. Both works call these runtimes exponential, and we acknowledge that this definition for exponential runtimes exists, but given the substantial difference between $2^{n^{1/80}}$ (which is less than $3.5$ for all $n \le 10^{20}$) and $\exp(\Theta(n))$ we prefer to reserve the notion ``exponential'' for the latter. 

The mutation-only version of the simple GA with population size $\mu \in \N$ is described in Algorithm~\ref{alg:simpleGA}. This algorithm starts with a population $P^{(0)}$ of $\mu$ random individuals from $\{0,1\}^n$. In each iteration $t = 1, 2, 3, \dots$, it computes from the previous population $P^{(t-1)}$ a new population $P^{(t)}$ by $\mu$ times independently selecting an individual from $P^{(t-1)}$ via fitness proportionate selection and mutating it via standard bit mutation with mutation rate $p = \frac 1n$. 

\begin{algorithm2e}%
	Initialize $P^{(0)}$ with $\mu$ individuals chosen independently and uniformly at random from $\{0,1\}^n$\;
	\For{$t = 1, 2, \ldots$}{
    \For{$i \in [1..\mu]$}{
      Select $x \in P^{(t-1)}$ via fitness proportionate selection\;
      Generate $P^{(t)}_i$ from $x$ via standard bit mutation\;
      }
  }
\caption{The simple genetic algorithm (simple GA) with population size $\mu$ to maximize a function $f : \{0,1\}^n \to \R_{\ge 0}$.}
\label{alg:simpleGA}
\end{algorithm2e}

The precise known results for the performance of Algorithm~\ref{alg:simpleGA} on the \onemax benchmark are the following. \cite[Theorem~8]{NeumannOW09} showed that with $\mu \le \poly(n)$ it needs with high probability more than $2^{n^{1-O(1/\log\log n)}}$ iterations to find the optimum of the \onemax function or any search point in Hamming distance at most $0.003n$ from it. This is only a sub\-exponential lower bound. In~\cite[Corollary~13]{Lehre11}, building on the lower bound method from~\cite{Lehre10}, a truly exponential lower bound is shown for the weaker task of finding a search point in Hamming distance at most $0.029n$ from the optimum, but only for a relatively large population size of $\mu \ge n^3$ (and again $\mu \le \poly(n)$). 

We now extend this result to arbitrary $\mu$, that is, we remove the conditions $\mu \ge n^3$ and $\mu \le \poly(n)$. To obtain the best known constant $0.029$ for how close to the optimum the algorithm cannot go in sub\-exponential time, we have to compromise with the constants in the runtime, which consequently are only of a theoretical interest. We therefore do not specify the base of the exponential function or the leading constant. We note that this would have been easily possible since we only use a simple additive Chernoff bound and Corollary~\ref{cor:sbm}. We further note that Lehre~\cite{Lehre11} also shows lower bounds for a scaled version of fitness proportionate selection and a general $\Theta(1/n)$ mutation rate. This would also be possible with our approach and would again remove the conditions on $\lambda$, but we do not see that the additional effort is justified here. 

\begin{theorem}\label{thm:fp}
  There is a $T = \exp(\Omega(n))$ such that the mutation-only simple GA optimizing \onemax with any population size $\mu$ with probability $1 - \exp(-\Omega(n))$ does not find any solution $x$ with $\onemax(x) \ge 0.971n$ within $T$ fitness evaluations.
\end{theorem}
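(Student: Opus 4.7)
The plan is to apply Corollary~\ref{cor:sbm} with $a = \lceil 0.029 n \rceil$ and a reproduction-rate parameter $\alpha$ slightly below $2 \cdot 0.971 = 1.942$; the selection condition, which does not hold pointwise for fitness proportionate selection, is verified via the ``novel domination argument'' alluded to in the introduction.

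First I would prove by induction on $t$ that every individual $P^{(t)}_j$ of the simple GA is stochastically at least as fit as a uniform random search point, i.e., $H(P^{(t)}_j, 1^n) \preceq \text{Bin}(n, \tfrac 12)$. The inductive step combines (a) a short cross-multiplication showing that fitness proportionate selection outputs a stochastically fitter individual than uniform selection from the same population; (b) the preservation of stochastic domination under mixtures (to handle uniform selection from a dominating population); and (c) a distributional version of Lemma~\ref{lem:witt} together with the fact that standard bit mutation preserves the uniform distribution on $\{0,1\}^n$. I would then lift this to a coupling of the simple GA with an auxiliary sequence $(Y^{(t)}_k)$ of iid-within-generation uniform random variables satisfying $f(P^{(t)}_k) \ge f(Y^{(t)}_k)$ pointwise, built via a standard quantile coupling on the fitness values and uniform sampling conditional on the fitness.

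Using the coupling, I apply a Chernoff bound to $\sum_{j \ne i} f(Y^{(t)}_j) \sim \text{Bin}((\mu-1)n, \tfrac 12)$ and a union bound over the first $T/\mu$ generations to show that a good event $\mathcal{G}$ of probability $1 - \exp(-\Omega(n))$ implies $\sum_{j \ne i} f(P^{(t)}_j) \ge (\mu-1)\tfrac n 2 (1-\eta)$ for all $t$ and $i$, for a suitable $\eta = \eta(\mu, n) = o(1)$. On $\mathcal{G}$, an elementary algebraic manipulation shows that the fitness-proportionate reproduction rate of any individual with $g(P_i) > a$ is bounded by $\alpha = \mu(n-a)/((n-a) + (\mu-1)(1-\eta)n/2)$, strictly below $1.942$ for all $\mu \ge 2$ when $\eta \to 0$ (the case $\mu = 1$ being trivial with $\alpha = 1$). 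Corollary~\ref{cor:sbm} applied with this $\alpha$ then yields, for $n$ large, a threshold $b$ strictly larger than $a$ by a constant factor, giving $\Pr[\text{distance } \le a \text{ reached within } T/\mu \text{ generations} \mid \mathcal{G}] \le \exp(-\Omega(n))$ for some $T = \exp(\Omega(n))$.

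The main obstacle is that Corollary~\ref{cor:sbm} demands a pointwise bound on $E[R(i,P) \mid P]$ for every population $P$, whereas for fitness proportionate selection such a bound holds only on $\mathcal{G}$. I would resolve this by rerunning the drift computation in the proof of Theorem~\ref{thm:main} conditionally on $\mathcal{G}$: on $\bar{\mathcal{G}}$, the trivial bound $R(i,P) \le \mu$ contributes only an extra $\mu^2 \Pr[\bar{\mathcal{G}}]$ term that is absorbed into the additive disturbance of Lemma~\ref{lem:nddrift} without affecting the exponential conclusion. The delicate quantitative point is tuning $\eta$ as a function of $\mu$ and $n$: Chernoff pushes $\eta$ upward while the drift condition pushes it downward, and reconciling the two constraints is what forces the exponential rate in $T$ to be small (though still positive), consistent with the author's remark that ``we have to compromise with the constants in the runtime.''
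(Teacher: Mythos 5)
Your overall architecture---a stochastic domination argument reducing fitness proportionate selection to uniform selection, a Chernoff bound to lower-bound the selection denominator, and an application of Corollary~\ref{cor:sbm} with $\alpha$ just below $2\cdot 0.971<e$---is exactly the paper's. However, two of your intermediate steps fail as stated. First, the coupling of the population with an \emph{iid-within-generation} uniform population $(Y^{(t)}_k)$ satisfying $f(P^{(t)}_k)\ge f(Y^{(t)}_k)$ pointwise cannot exist: it would imply $\Bin(\mu n,\tfrac 12)\preceq \sum_{k}\onemax(P^{(t)}_k)$, and the paper gives an explicit counterexample to precisely this statement (for $\mu=n$ after one generation, the event $\sum_k\onemax(P^{(1)}_k)\le 0.4n\mu$ has probability at least $(20n)^{-n}$, vastly exceeding the binomial tail $\exp(-0.02n^2)$). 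Only the \emph{marginal} domination $\Bin(n,\tfrac 12)\preceq\onemax(P^{(t)}_i)$ holds, and that is all you need: apply the additive Chernoff bound to each individual separately and union-bound over the $\mu L$ individuals generated in $L$ generations. This even yields the stronger conclusion that every individual has fitness at least $s=(\tfrac 12-\eps)n$, whence $E[R(i,P)]\le (n-a)/s$ directly, with no need for the $\eta$-dependent expression.

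Second, absorbing the bad event $\bar{\mathcal{G}}$ into the additive disturbance $\Delta$ of Lemma~\ref{lem:nddrift} does not survive the arithmetic. On $\bar{\mathcal{G}}$ the potential is only bounded by $\mu$, so the extra additive term per step is of order $\mu^2\exp(-2\eps^2 n)$, while the drift conclusion is non-vacuous only if $\Delta\ll\delta M=\delta\exp(-\kappa a)$ with $\kappa a=\ln(2/\gamma)\cdot 0.029n\ge 0.051n$. Keeping $\alpha=(n-a)/s<e$ forces $s>0.971n/e\approx 0.357n$, hence $\eps<0.143$ and $2\eps^2<0.042<\kappa a/n$ (the gap only widens as $\eps$ grows, since $\kappa\to\infty$ as $\alpha\to e$); so the extra term dwarfs $\delta M$ for every admissible $\eps$ and the resulting bound on $\Pr[T<L]$ is vacuous. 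The paper sidesteps this by modifying the \emph{algorithm} rather than the drift computation: once some individual's fitness drops below $s$, selection is switched to uniform, so the reproduction-rate bound holds unconditionally, Corollary~\ref{cor:sbm} applies to the modified process verbatim, and the original and modified processes are shown to coincide for $L$ generations with probability $1-L\mu\exp(-2\eps^2n)$; the two exponentially small error terms are combined only additively at the level of probabilities, never inside the drift inequality. A stopping-time variant of your idea would also work, but the absorption into $\Delta$ as proposed does not.
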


The main difficulty in proving lower bounds for algorithms using fitness proportionate selection is that the reproduction number is non-trivial to estimate. If all but one individual have a fitness of zero, then this individual is selected $\mu$ times. Hence $\mu$ is the only general upper bound for the reproduction number. The  previous works and ours overcome this difficulty by arguing that the average fitness in the population cannot significantly drop below the initial value of $n/2$, which immediately yields that an individual with fitness $k$ has a reproduction number of roughly at most $\frac{k}{n/2}$.

While it is natural that the typical fitness of an individual should not drop far below $n/2$, making this argument precise is not completely trivial. In~\cite[Lemma~6]{NeumannOW09}, it was informally argued that the situation with fitness proportionate selection cannot be worse than with uniform selection. For the latter situation a union bound over all lineages of individuals is employed and a negative-drift analysis from~\cite[Section~3]{OlivetoW08} is used for a single lineage. The analysis in~\cite[Lemma~9]{Lehre11} builds on the (positive) drift stemming from standard bit mutation when the fitness is below $n/2$ (this argument needs a mutation rate of at least $\Omega(1/n)$) and the independence of the offspring (here the lower bound $\lambda \ge n^3$ is needed to admit the desired Chernoff bound estimates). 

Our proof relies on a natural domination argument which shows that at all times all individuals are at least as good as random individuals in the sense of stochastic domination in fitness. This allows to use a simple Chernoff and union bound to argue that with high probability, for a long time all individuals have a fitness of at least $(\frac 12 - \eps) n$. The remainder of the proof is an application of Corollary~\ref{cor:sbm}. Here Lehre's lower bound~\cite[Theorem~4]{Lehre10} would have been applicable as well with the main difference that there one has to deal with the constant $\delta$, which does not exist in Corollary~\ref{cor:sbm}. 

We start by proving the key argument used in the proof of Theorem~\ref{thm:fp}, namely that at each time $t$ for each individual $i \in [1..\mu]$ the fitness stochastically dominates (see Section~\ref{sec:prelims}) the one of a random individual. We denote by $\Bin(n,p)$ the binomial distribution with parameters $n$ and $p$. With a slight abuse of notation, we write ${\Bin(n,p) \preceq Y}$ to denote that $Y$ stochastically dominates $X$ when $X$ is binomially distributed with parameters $n$ and $p$. 

In this notation, our goal is to show that for all times $t$ and all $i \in [1..\mu]$, we have $\Bin(n,\frac 12) \preceq \onemax(P^{(t)}_i)$. This statement appears easy to believe since fitness proportionate selection, favoring better individuals at least slightly, should not be able to make the population worse. To be on the safe side, we nevertheless prove this statement formally (after the following remark). 

We note that another statement that might be easy to believe is not true, namely that the sum of the fitness values of a population at all times $t \ge 1$ dominates the sum of the fitness values of a random population (such as the initial population), that is, that $\Bin(\mu n, \frac 12) \preceq \sum_{i=1}^\mu \onemax(P^{(t)}_i)$. As counter-example, let $n$ be a multiple of $10$ and let us consider the simple GA with $\mu = n$ after one iteration. Let $Y = \sum_{i=1}^\mu \onemax(P^{(1)}_i)$. We estimate the probability of the event $Y \le 0.4 n \mu$. With probability at least $2^{-n}$ we have $\onemax(P^{(0)}_1) = 0.4n$. For each $i = 2, \dots, \mu$, we have $\onemax(P^{(0)}_i) \le 0.5n$ with probability $0.5$ by the symmetry of the binomial distribution with parameter $p=0.5$. These events are all independent, so with probability at least $2^{-n - \mu + 1}$, we have all of them. In this case, for each $i = 1, \dots, \mu$ independently, with probability at least $0.4n / (0.4n + 0.5n(\mu-1)) \ge 0.8 / \mu$ the $i$-th parent chosen in iteration $1$ is $P^{(0)}_1$ and with probability at least $(1 - 1/n)^n \ge 1/4$ the offspring generated from it equals the parent. All these events together occur with probability at least $2^{-n - \mu + 1} (0.8/\mu)^\mu (1/4)^\mu \ge (20n)^{-n}$, recall that $\mu = n$, which shows $\Pr[Y \le 0.04 n^2] \ge (20n)^{-n}$. Now for $X \sim \Bin(n\mu,\frac 12)$, a simple Chernoff bound argument, e.g., via the additive Chernoff bound~\cite[Theorem~1.10.7]{Doerr20bookchapter}, shows that $\Pr[X \le 0.4n\mu] \le \exp(2(0.1n\mu)^2 / n\mu) = \exp(-0.02 n\mu) = \exp(-0.02 n^2)$. Since this is (much) smaller than $(20n)^{-n}$ for $n$ sufficiently large ($n \ge 180$ suffices), we do not have $X \preceq Y$.

\begin{lemma}\label{lem:fpdom}
  Consider a run of the simple GA (Algorithm~\ref{alg:simpleGA}) on the \onemax benchmark. Then for each $t \ge 0$ and each $i \in [1..\mu]$, we have $\Bin(n,\frac 12) \preceq \onemax(P^{(t)}_i)$.
\end{lemma}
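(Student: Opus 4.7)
The plan is to proceed by induction on $t$. The base case $t=0$ is immediate: each $P^{(0)}_i$ is uniform on $\{0,1\}^n$ by construction, so $\onemax(P^{(0)}_i) \sim \Bin(n, \tfrac 12)$. For the inductive step, fix $i \in [1..\mu]$ and recall that $P^{(t)}_i = \mut(X)$, where $X$ is the parent drawn from $P^{(t-1)}$ by fitness proportionate selection. I will establish in two stages that (a) $\onemax(X) \succeq \Bin(n, \tfrac 12)$ and (b) standard bit mutation preserves this OneMax-domination; together these close the induction.

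For stage (a) I work conditionally on $P^{(t-1)}$. Writing $f_j := \onemax(P^{(t-1)}_j)$, fitness proportionate selection gives, for any threshold $\lambda \in \R$,
\[\Pr[\onemax(X) \le \lambda \mid P^{(t-1)}] = \frac{\sum_{j : f_j \le \lambda} f_j}{\sum_{j=1}^\mu f_j}.\]
A one-line rearrangement argument---the average of the $f_j$'s that do not exceed $\lambda$ is at most the overall average---bounds this ratio by $\mu^{-1} |\{j : f_j \le \lambda\}|$. Taking expectations over $P^{(t-1)}$ and applying the induction hypothesis to each $P^{(t-1)}_j$ marginally then yields $\Pr[\onemax(X) \le \lambda] \le \Pr[\Bin(n, \tfrac 12) \le \lambda]$, that is, $\onemax(X) \succeq \Bin(n, \tfrac 12)$. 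The degenerate case of an all-zero parent population has probability at most $2^{-\mu n}$ under the induction hypothesis and can be handled by any reasonable convention for fitness proportionate selection in that corner.

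For stage (b) I invoke Lemma~\ref{lem:witt} with $x^* = (1, \dots, 1)$, so that $\onemax(\cdot) = n - H(\cdot, x^*)$. The key auxiliary observation is that standard bit mutation sends the uniform distribution on $\{0,1\}^n$ to itself, so if $Y$ is uniform, then $H(\mut(Y), x^*) \sim \Bin(n, \tfrac 12)$. By stage (a) and the symmetry of the binomial, $H(X, x^*) \preceq H(Y, x^*)$, so one can couple $X$ with a uniform $Y$ such that $H(X, x^*) \le H(Y, x^*)$ pointwise. Lemma~\ref{lem:witt} then applies in every realization of this coupling and, after integration over the coupling and the independent mutation randomness, gives $H(\mut(X), x^*) \preceq H(\mut(Y), x^*)$. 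Rewriting this in terms of $\onemax$ yields $\onemax(P^{(t)}_i) = \onemax(\mut(X)) \succeq \Bin(n, \tfrac 12)$, closing the induction. The main obstacle is stage (a): the cautionary example preceding the lemma warns that population-level domination statements can fail under fitness proportionate selection, and the rearrangement inequality above is exactly what rescues the single-individual version; stage (b) is then essentially mechanical given Lemma~\ref{lem:witt}.
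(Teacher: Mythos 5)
Your proof is correct and follows essentially the same route as the paper: induction over $t$, the same rearrangement/averaging inequality showing that fitness proportionate selection stochastically dominates uniform selection (which the paper isolates as Lemma~\ref{lem:unifp}), and then Lemma~\ref{lem:witt} together with the fact that standard bit mutation preserves the uniform distribution to push the domination through the mutation step. Your explicit coupling in stage~(b) and the remark on the all-zero population merely spell out details the paper leaves implicit.
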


To prove this result, we use the following auxiliary result, which states a number uniformly chosen from a collection of non-negative numbers is stochastically dominated by a number chosen from the same collection via an analogue of fitness proportionate selection. To define the latter formally, let $n_1, \dots, n_\mu \in \R_{\ge 0}$. For a random variable $v$ we write $v \sim \fp(n_1, \dots, n_\mu)$ if
\begin{itemize}
\item in the case that $u_i > 0$ for at least one $i \in [1..\mu]$, we have $\Pr[v = i] = \frac{n_i}{\sum_{j=1}^\mu n_j}$ for all $i \in [1..\mu]$, and
\item in the case that $u_i = 0$ for all $i \in [1..\mu]$, we have $\Pr[v = i] = \frac 1\mu$ for all $i \in [1..\mu]$. 
\end{itemize}

\begin{lemma}\label{lem:unifp}
  Let $n_1, \dots, n_\mu \in \R_{\ge 0}$. Let $u \in [1..\mu]$ be uniformly chosen and $U = n_u$. Let $v \sim \fp(n_1, \dots, n_\mu)$ and $V = n_v$. Then $U \preceq V$.
\end{lemma}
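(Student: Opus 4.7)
The plan is to verify the definition of stochastic domination directly by comparing the cumulative distribution functions of $U$ and $V$ at every threshold $\lambda \in \R$. Since both distributions are symmetric under permutations of the indices, I may assume without loss of generality that $n_1 \le n_2 \le \dots \le n_\mu$. The degenerate case where $n_1 = \dots = n_\mu = 0$ is trivial, since then both $u$ and $v$ are uniform on $[1..\mu]$, so $U$ and $V$ are identically distributed. So I assume $\sum_{j} n_j > 0$ in what follows.

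For a fixed threshold $\lambda \in \R$, let $k \in [0..\mu]$ be the largest index with $n_k \le \lambda$ (with $k=0$ meaning no such index). Since the $n_j$ are sorted, the set $\{j : n_j \le \lambda\}$ is exactly $[1..k]$. Then $\Pr[U \le \lambda] = k/\mu$ and $\Pr[V \le \lambda] = \bigl(\sum_{j=1}^{k} n_j\bigr)/\bigl(\sum_{j=1}^{\mu} n_j\bigr)$. The boundary cases $k=0$ and $k=\mu$ give $\Pr[V \le \lambda] = 0 = \Pr[U \le \lambda]$ and $\Pr[V \le \lambda] = 1 = \Pr[U \le \lambda]$, respectively, so I may assume $1 \le k \le \mu - 1$.

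The desired inequality $\Pr[V \le \lambda] \le \Pr[U \le \lambda]$ then rearranges to $(\mu - k) \sum_{j=1}^{k} n_j \le k \sum_{j=k+1}^{\mu} n_j$, equivalently, that the average of the smallest $k$ values is at most the average of the largest $\mu - k$ values. This is immediate from the monotonicity $n_j \le n_k \le n_{k+1} \le n_{j'}$ for all $j \le k < k+1 \le j'$: averaging $k(\mu - k)$ pairwise comparisons of this form yields the bound. Since $\lambda$ was arbitrary, this establishes $U \preceq V$.

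The argument is essentially elementary and I do not foresee any real obstacle; the only subtlety is making sure the degenerate case (all $n_i$ zero) is handled by the explicit fallback in the definition of $\fp$, which is why the lemma's conclusion still holds there.
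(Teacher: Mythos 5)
Your proof is correct and follows essentially the same route as the paper's: after sorting, both arguments reduce to the fact that the average of the $k$ smallest values is at most the overall average (your rearranged form $(\mu-k)\sum_{j=1}^{k} n_j \le k\sum_{j=k+1}^{\mu} n_j$ is equivalent to the paper's $\frac{1}{i}\sum_{j=1}^{i} n_j \le \frac{1}{\mu}\sum_{j=1}^{\mu} n_j$). The only cosmetic difference is that you verify the CDF inequality at every threshold $\lambda$, while the paper checks it only at the points $n_1,\dots,n_\mu$ and notes this suffices since both variables are supported there.
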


\begin{proof}
  The claim follows immediately from the definition of $\fp(\cdot)$ when $n_i = 0$ for all $i \in [1..\mu]$. Hence let us assume that there is at least one $i \in [1..\mu]$ such that $n_i > 0$. Let us for convenience assume that $n_1 \le n_2 \le \dots \le n_\mu$. Then apparently 
  \[\frac{1}{i} \sum_{j=1}^i n_j \le \frac{1}{\mu} \sum_{j=1}^\mu n_j\]
and hence 
  \[\Pr[V \le n_i] = \frac{\sum_{j=1}^i n_j}{\sum_{j=1}^\mu n_j} \le \frac i \mu = \Pr[U \le n_i]\]
for all $i \in [1..\mu]$. This suffices to show stochastic domination since both $U$ and $V$ only take the values $n_1, \dots, n_\mu$. 
\end{proof}

We now show Lemma~\ref{lem:fpdom}.

\begin{proof}[Proof of Lemma~\ref{lem:fpdom}]
  We show the claim via induction over time. For the random initial population $P^{(0)}$, the claim is obviously true. Assume that in some iteration $t+1$, the parent population $P^{(t)}$ satisfies that for all $i \in [1..\mu]$, we have $\Bin(n,\frac 12) \preceq \onemax(P^{(t)}_i)$. We show that the same is true for $P^{(t+1)}$. Since all individuals of $P^{(t+1)}$ are identically distributed, we consider how one of them is generated. Let $u \in [1..\mu]$ be random and let $v \sim \fp(\onemax(P^{(t)}_1), \dots, \onemax(P^{(t)}_\mu))$ be the parent individual selected for the generation of the offspring. By our inductive assumption and Lemma~\ref{lem:unifp}, we have $\Bin(n,\frac 12) \preceq \onemax(P^{(t)}_u) \preceq \onemax(P^{(t)}_v)$. Let $x$ be a uniformly random individual and $y = P^{(t)}_v$ be the parent just selected. Let $x'$ and $y'$ be the results of applying standard bit mutation to $x$ and $y$. Since $\onemax(x) \preceq \onemax(y)$, by Lemma~\ref{lem:witt} we have $\onemax(x') \preceq \onemax(y')$. Now $y'$ is equal (in distribution) to the offspring we just regard and $x'$ is (still) a random bit string. Hence $\Bin(n,\frac 12) \sim \onemax(x') \preceq \onemax(y')$ as desired. 
\end{proof}

We are now  ready to give the formal proof of Theorem~\ref{thm:fp}.

\begin{proof}[Proof of Theorem~\ref{thm:fp}]
  Consider a run of the simple GA with population size~$\mu$. With the domination argument of Lemma~\ref{lem:fpdom}, the fitness of a particular solution $P^{(t)}_i$ dominates a sum of $n$ independent uniform $\{0,1\}$-valued random variables. Hence using the additive Chernoff bound (see, e.g., \cite[Theorem~1.10.7]{Doerr20bookchapter}), we see that $\onemax(P^{(t)}_i) \le (\frac 12 - \eps) n =: s$ with probability at most $\exp(-2 \eps^2 n)$ for all $\eps>0$.
  
  To avoid working in conditional probability spaces, let us consider a modification of the simple GA. It is identical with the original algorithm up to the point when the fitness of an individual $P^{(t)}_i$ for the first time goes below~$s$. From that time on, the algorithm selects the parents uniformly. Such an artificial continuation of a process from a time beyond the horizon of interest on was, to the best of our knowledge, in the theory of evolutionary algorithms first used in~\cite{DoerrHK11}. For our modified simple GA, the reproduction rate of any individual in a population with all elements having fitness less than $n-a$, $a \le n-s$,  is at most $\frac{n-a}{s} =: \alpha$. Hence we can apply Corollary~\ref{cor:sbm} with this $\alpha$. Taking, similar as in~\cite{Lehre11}, $\eps = 0.0001$ and $a = 0.029$, we can work with $\alpha = \frac{1-a}{0.5 - \eps} \approx 1.942388$ %1.9423884776955391078
and thus $\gamma \approx 0.336082$. %0.3360816101281997646
For $n$ sufficiently large, this allows to use $b = \lceil 0.02905 n\rceil$.  
  
  For the first time $T'$ that the modified algorithm finds a solution with fitness at least $n-a$ we thus obtain 
  \[\Pr[T' < L] = \frac{2L\mu n}{p \alpha} \max\left\{1, \frac{p\alpha}{2n}\right\} \exp\left(-\ln\left(\frac 2\gamma \right)(b-a)\right) = L \mu \exp(-\Omega(n))\] 
  for all~$L$. Since with probability at least $1 - L \mu \exp(-2 \eps^2 n)$ the modified and the true algorithm do not differ in the first $L$ iterations (union bound over all individuals generated in this time interval), we have $\Pr[T < L] \le L \mu \exp(-\Omega(n)) + L \mu \exp(-\Omega(n)) = L \mu \exp(-\Omega(n))$. With $L = \exp(\Theta(n))/\mu$ suitably chosen, we have shown the claim (note that each iteration takes $\mu$ fitness evaluations and note further that we can assume $\mu = \exp(O(n))$ sufficiently small as otherwise the evaluation of the initial search points already proves the claim).
\end{proof}

While an exponential runtime on \onemax is not an exciting performance, for the sake of completeness we note that the runtime of the simple GA on \onemax is not worse than exponential. A runtime of $\exp(O(n))$ can be shown with the methods of~\cite{Doerr20ppsnUB} (with some adaptations). The key observation is that, similar to property~(A) in~\cite[Theorem~3]{Doerr20ppsnUB}, if at some time $t$ the population contains an individual $x$ with some fitness at least $n/3$, then in the next iteration this individual is chosen as parent at least once with at least constant probability and, conditional on this, with probability $\Omega(\frac 1n)$ a particular better Hamming neighbor of $x$ is generated from $x$. 

\section{Conclusion and Outlook}\label{sec:conclusions}

In this work, we have proven two technical tools which might ease future lower bound proofs in discrete evolutionary optimization. The negative multiplicative drift theorem has the potential to replace the more technical negative drift theorems used so far in different contexts. Our strengthening and simplification of the negative drift in populations method should help increasing our not very well developed understanding of population-based algorithms in the future. Clearly, it is restricted to mutation-based algorithms -- providing such a tool for crossover-based algorithms and extending our understanding how to prove lower bounds for these beyond the few results~\cite{DoerrT09,OlivetoW15,SuttonW19,Doerr20symmetryarxiv} would be a great progress. 

\subsection*{Acknowledgement}

This work was supported by a public grant as part of the
Investissement d'avenir project, reference ANR-11-LABX-0056-LMH,
LabEx LMH.

%%\bibliographystyle{alpha}
%%\bibliography{../../IPL/eigene_arbeiten/ich_master,../../IPL/fremde_arbeiten/alles_ea_master}

\newcommand{\etalchar}[1]{$^{#1}$}

}%end sloppy
%\end{large}
\end{document}